\def\seq#1{\mbox{\boldmath $#1$}}
\def\vec#1{\mbox{\boldmath $#1$}}
\def\mat#1{\mbox{\bf #1}}
\algrenewcommand\algorithmicrequire{\textbf{Input:}}
\algrenewcommand\algorithmicensure{\textbf{Output:}}
\definecolor{lightgray}{rgb}{0.65, 0.65, 0.65}
\newcommand{\CBLG}[1]{\!\!{\textbf{#1}\!\!}}
\definecolor{lightlightgray}{rgb}{0.86, 0.86, 0.86}
\newtheorem{definition}{Definition}
\newtheorem{lemma}{Lemma}
\newtheorem{proposition}{Proposition}
\newtheorem{theorem}{Theorem}
\newcommand{\argmin}{\mathop{\rm arg~min}\limits}
\title{LCS Graph Kernel Based on Wasserstein Distance in\\Longest Common Subsequence Metric Space}
\author{Jianming Huang\thanks{Department of Computer Science and Communications Engineering, Graduate School of Fundamental Science and Engineering, WASEDA University, 3-4-1 Okubo, Shinjuku-ku, Tokyo 169-8555, Japan (e-mail: koukenmei@toki.waseda.jp) } \and Zhongxi Fang\thanks{Dept. of Computer Science and Engineering, School of Fundamental Science and Engineering, WASEDA University, 3-4-1 Okubo, Shinjuku-ku, Tokyo 169-8555, Japan (e-mail: fzx@akane.waseda.jp)} \and Hiroyuki Kasai\thanks{Department of Communications and Computer Engineering \& Department of Computer Science and Engineering, WASEDA University, 3-4-1 Okubo, Shinjuku-ku, Tokyo 169-8555, Japan (e-mail: hiroyuki.kasai@waseda.jp)} \thanks{H. Kasai was partially supported by JSPS KAKENHI Grant Numbers JP16K00031 and JP17H01732, and by Support Center for Advanced Telecomm. Technology Research (SCAT).}}
\begin{document}

\maketitle

\begin{abstract}
For graph learning tasks, many existing methods utilize a message-passing mechanism where vertex features are updated iteratively by aggregation of neighbor information. This strategy provides an efficient means for graph features extraction, but obtained features after many iterations might contain too much information from other vertices, and tend to be similar to each other. This makes their representations less expressive. Learning graphs using paths, on the other hand, can be less adversely affected by this problem because it does not involve all vertex neighbors. However, most of them can only compare paths with the same length, which might engender information loss. To resolve this difficulty, we propose a new Graph Kernel based on a Longest Common Subsequence (LCS) similarity. Moreover, we found that the widely-used $\mathcal{R}$-convolution framework is unsuitable for path-based Graph Kernel because a huge number of comparisons between dissimilar paths might deteriorate graph distances calculation. Therefore, we propose a novel metric space by exploiting the proposed LCS-based similarity, and compute a new Wasserstein-based graph distance in this metric space, which emphasizes more the comparison between similar paths. Furthermore, to reduce the computational cost, we propose an adjacent point merging operation to sparsify point clouds in the metric space. The source code is available at \url{https://github.com/AkiraJM/LCSGK}.
\end{abstract}

\begin{center}
{\small
Published in Signal Processing: https://doi.org/10.1016/j.sigpro.2021.108281 \cite{Huang_SigPro_2021}.
}
\end{center}

\section{Introduction}
\label{Sec:Intro}

Graph-structured data have been used widely in various fields such as chemoinformatics, bioinformatics, social networks, and computer vision \cite{vishwanathan2010graph,Kriege_ANS_2020}.{ Graph classification is a branch of graph-structured data research aimed at classifying graphs with or without attributes. Many studies have examined graph classification, including investigations of Graph Kernels, which are kernel functions measuring similarity between graphs, and Graph Neural Networks (GNN), which are applications of neural networks in a graph domain.} For graph classification tasks, a key difficulty is finding {an effective method of exploring graphs and aggregating the extracted information, which can be adapted to different classification criteria \cite{huang2021graph}. Many state-of-the-art studies of GNNs and Graph Kernels use message-passing mechanisms based on a vertex neighborhood. This aggregation strategy brings great success to graph classification, but it still entails some shortcomings such as {\it over-smoothing} difficulties \cite{li2018deeper, chen2020measuring} as described below.}

\subsubsection{Breadth-first strategy and Depth-first strategy}
Various strategies are available for {graph exploration and feature aggregation}. Some studies specifically find relations between vertices and their neighbors. This type of strategy is a {\it breadth-first strategy}, which tends to construct a neighbor network surrounding a certain vertex. For an immediate instance, the Weisfeiler--Lehman Graph Kernel \cite{Shervashidze_JMLR_2011_s}, uses the theory of the Weisfeiler--Lehman test \cite{weisfeiler1968reduction}, where vertex neighbors are aggregated to produce a new vertex label using a hash mapping function. The new label denotes a certain neighborhood pattern. Then graph similarity can be computed through comparison of these compressed labels of topological patterns.
However, such breadth-first strategies entail several shortcomings.
(1) Because {many} breadth-first strategies aggregate information of all neighbors, the embedding of a vertex usually combines excessive information, especially in a dense graph. (2) As the depth of a neighbor network increases, {the number of neighbor vertices} grows geometrically, as does the quantity of information, especially when one wants to assess the relations of vertices that are mutually distant. {This phenomenon is called over-smoothing because each aggregation of neighborhood is a ``smooth'' operation: as the depth increases, the aggregated feature of all vertices tend to be similar, leading to over-smoothing \cite{li2018deeper, chen2020measuring}.}

Along other avenues of development, some studies have specifically examined paths and walks of graphs. In contrast to breadth-first strategies, they learn graph topology by comparing paths or walks. In immediate instances, one can find explanations of random walk Graph Kernel \cite {gartner2003graph} and shortest path Graph Kernel \cite {borgwardt2005shortest} in the literature. We regard these as {\it depth-first strategies}, which use a simple form of subgraph to learn the subgraph structure, instead of using complicatedly connected neighbor networks. In this manner, comparison of neighbor networks is transformed to a difficulty of comparing sets of path (or walks). Adverse effects of the problem of excessive information such as breadth-first strategies are less likely {because they usually aggregate fewer node features than breadth-first methods when depth grows. Earlier work in \cite{li2018deeper} also uses a random walk to help explore graphs for GNN.} Nevertheless, this category of approach, called a {\it path-based} method or a {\it walk-based} method, is adversely affected by the common truth that they entail high computational costs because sampled paths are usually numerous and have high redundancy.

\subsubsection{Path comparison strategy based on longest common subsequence}
In the path-based and walk-based methods, a traditional strategy to compare two path sequences is to judge whether these two paths are completely identical. Taking the random walk kernel for example, two random walks are judged as identical if their lengths are equal, and also if their label sequences are identical for labeled graphs. 
However, 
most such methods require that the compared paths should have {\it equal} length. Therefore, because those paths with different length cannot be compared under this approach, it might cause information loss. 
To this end, we propose to leverage a {\it longest common subsequence} (LCS) to calculate path similarity, which enables comparison of the paths having different length.
Furthermore, to alleviate increase of computational cost of many pairs of paths, we propose a merging approach to combine similar and redundant paths. This merging is achieved in the proposed metric space, which is elaborated on below.

\subsubsection{Application of optimal transport theory}
Because of computational improvements in solving optimal transport (OT) problems in recent years, optimal transport theory \cite{Villani_2008_OTBook_s, Peyre_2019_OTBook_s} has been used in many machine-learning domains including graph classification tasks. A representative product of those examinations is Wasserstein--Weisfeiler--Lehman Graph Kernels (WWL) \cite{togninalli2019Wasserstein}, which improved the Weisfeiler--Lehman kernel by replacing a traditional $\mathcal{R}$-Convolution framework \cite{haussler1999convolution} with an optimal transport scheme. Most earlier Graph Kernel studies use the $\mathcal{R}$-Convolution framework, which tends to decompose a graph into several substructures such as subgraphs, paths, vertices, and edges. Then the overall graph similarity is computed by aggregating similarity between substructures. 
Let $G_1, G_2$ be two graphs to be compared, then the value of $\mathcal{R}$-Convolution Graph Kernel can be written as
\begin{eqnarray}
\label{Eq:RC}
k(G_1, G_2) = \sum_{x\in R^{-1}(G_1), y\in R^{-1}(G_2)}\prod_{d=1}^D k_d(x_d, y_d)
\end{eqnarray}
\cite{haussler1999convolution}. { Let $\mathcal{R} = \mathcal{R}_1 \times ... \times \mathcal{R}_D$ be a substructure space for graph $G$ where $\mathcal{R}_1, ..., \mathcal{R}_{D}$ are $D$ nonempty, separable metric spaces, and} graph $G$ can be decomposable into elements of $\mathcal{R}$, for example, vertices and paths. Let $R(\cdot)$ be the function by which $R(G) = x$ holds only if $x\in \mathcal{R}$ makes up $G$. Then $R^{-1}(G) = \{ x\in \mathcal{R}: R(G)=x \}$. $k_d$ is a kernel function for substructures in $\mathcal{R}_d$.
Nevertheless, traditional aggregation strategies such as pure sum, max, and min are overly simplistic. They might be unable to capture complex features of graph structures {\cite{togninalli2019Wasserstein}}. Moreover, we found that the $\mathcal{R}$-Convolution framework is unsuitable for the path-based strategy for the following reason: most graphs include a larger number of paths than vertices, and most of these paths are markedly different from each other and are unlikely to be matched. Although the kernel value $k_d$ of such {\it dissimilar} paths is very small in general, it still becomes non-negligible when there exist a huge number of these noisy values. Consequently, Eq.(\ref{Eq:RC}) will be easily affected by these numerous dissimilar pairs, leading to deterioration of the calculation of graph distances. Consequently, we propose to compute the kernel value using the Wasserstein distance with a new ground metric based on the proposed LCS similarity. The main motivation is that the Wasserstein distance based on optimal transport theory can assign less importance to such dissimilar paths between two graphs. As a consequence, the estimated similarity metric is expected to become more accurate than the one obtained from the existing $\mathcal{R}$-Convolution framework.

\begin{figure}[t]
\setlength{\belowcaptionskip}{-0.05cm} 
\centering
\includegraphics[width=0.6\textwidth]{./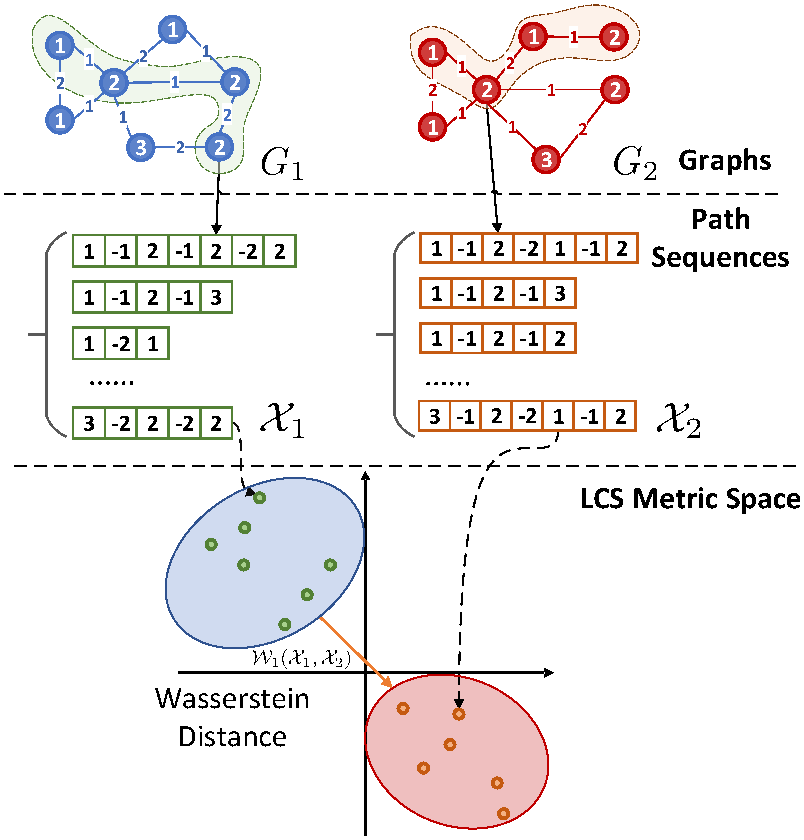} 
\caption{Visual summary of the proposed LCS Graph Kernel. Shortest paths are serialized as label sequences and are mapped to an LCS metric space. The Wasserstein distance of two distributions in this space is used as the LCS kernel value.
} 
\label{Fig.1} 
\end{figure}

\subsubsection{Overview of proposed method}
This paper specifically addresses the path-based methods. To overcome such methods' restriction of the path length, and to compute a similarity metric that is less affected by dissimilar paths, we propose the LCS Graph Kernel using the LCS-based similarity and Wasserstein distance. Figure \ref{Fig.1} portrays a summary of the proposed LCS Graph Kernel. With our method, we first perform shortest path serialization to transform all vertex-to-vertex shortest paths as path sequences. We define a shortest path sequence set to represent the graph structure. We then map these path sequence sets to an LCS metric space, where a metric between two sequences is based on the length of their longest common subsequence. As the final step, we use the 1-Wasserstein distance of two distributions as our kernel value. In addition to our basic method, we propose a faster variant that greatly reduces computational costs by newly introducing an adjacent point merging operation to sparsify points clouds in the LCS metric space. {We evaluate our proposed methods using several widely assessed small graph benchmark datasets and a large-scale graph dataset, comparing to other state-of-the-art Graph Kernel methods, and Graph Neural Network methods: a hot research topic in graph learning. The evaluation results clarify that our proposed methods are competitive and that they outperform some state-of-the-art Graph Neural Network methods even in large-scale dataset.} Generally, our contributions can be summarized as described below.
\begin{itemize}
	\item We present a novel Graph Kernel to compare labeled graphs using the LCS similarity of path sequences and optimal transport theory. This method enables paths with different length to be compared, and is robust against noise caused by a huge number of pairs of dissimilar paths, which gives a more accurate metric than the $\mathcal{R}$-convolution framework does.
	\item We conduct numerical experiments using popular small and large-scale real-world datasets, with results demonstrating that our proposed method outperforms many state-of-the-art methods in graph classification tasks.
\end{itemize}

{All theorem proofs, all the pseudo-code of the proposed algorithms and additional experiment results are presented in Supplementary Materials.}

\section{Preliminaries}
This section first introduces some notation and preliminary points of graphs. Hereinafter, we represent scalars as lower-case letters $(a, b, \ldots)$, vectors as bold typeface lower-case letters $(\vec{a}, \vec{b}, \ldots)$, and matrices as bold typeface capitals $(\mat{A}, \mat{B}, \ldots)$. An element at $(i,j)$ of a matrix \mat{A} is represented as $\mat{A}(i,j)$. $\vec{1}_{n}=(1, 1, \ldots, 1)^T \in \mathbb{R}^n$. We represent sequences as bold italic typeface lower-case $(\seq{a},\seq{b},...)$, and $|\seq{a}|$ denotes the length of sequence $\seq{a}$. We express $\mathbb{R}_{+}^{{n}}$ to denote non-negative $n$-dimensional vector. In addition, $\mathbb{R}_{+}^{n \times m}$ represents a nonnegative matrix of size ${n \times m}$. The unit-simplex, simply called {\it simplex}, is denoted by $\Delta_n$, which is the subset of $\mathbb{R}^n$ comprising all nonnegative vectors for {the elements of which sum up to 1}. In addition, $\delta_x$ is the Dirac function at $x$.

\subsection{Preliminaries of graphs}
\label{Sec:GM}

A graph is a pair $G=(\mathcal{V}, \mathcal{E})$ consisting of a set of $n$ vertices (or nodes) $\mathcal{V}=\{v_1, v_2, \ldots, v_n\}$ and a set of $m$ edges $\mathcal{E} \subseteq \mathcal{V} \times \mathcal{V}$. {If a graph includes only edges with no direction, then we say it an undirected graph}. 
The numbers of vertices and edges are, respectively, $|\mathcal{V}|$ and $|\mathcal{E}|$. If two vertices, say $v_i, v_j \in \mathcal{V}$, are connected by an edge $e$, then this edge is denoted as $e_{ij}$. These two vertices are said to be adjacent or neighbors. {For all assumptions in this paper,} we consider only undirected graphs with no self-loop. 

Given an undirected graph $G=(\mathcal{V}, \mathcal{E})$ and a vertex $v_i \in \mathcal{V}$, the degree of $v_i$ in $G$, denoted as $\sigma_i$, is the number of edges incident to $v_i$. It is defined as 
$\sigma_i =  |\{ v_j: e_{ij} \in \mathcal{E}\}| = | \mathcal{N}(v_i)|,$
where $\mathcal{N}(v_i)$ represents the neighborhood set of $v_i$.

A walk in a graph $G=(\mathcal{V}, \mathcal{E})$ is a sequence of vertices $v_1, v_2, . . ., v_{k+1}$, where $v_i \in \mathcal{V}$ for all $1 \leq i \leq k+1$ and {$(v_i, v_{i+1}) \in \mathcal{E}$} for all $1 \leq i \leq k$. The walk length is equal to the number of edges in the sequence, i.e., $k$ in the case above. A walk in which $v_i \neq v_j \Leftrightarrow i \neq j$ is called a path. {A path between two adjacent vertices is equivalent to an edge}. A set of paths between two non-adjacent vertices is denoted as $\mathcal{P}_{i,j}$. One path of $\mathcal{P}_{i,j}$ is denoted as $p_{i,j}$. Moreover, the length of $p_{i,j}$ is denoted as $|p_{i,j}|$.
A shortest path, denoted as $p^{\star}_{i,j}$, from vertex $v_i$ to vertex $v_j$ of a graph $G$, is a path from $v_i$ to $v_j$ such that no other path exists between these two vertices with smaller length.

\subsection{Longest common subsequence}
\label{Sec:LCS}

A sequence is defined mathematically as an enumerated collection of objects of a certain order, where repetition of elements is allowed. The length of a sequence denotes the number of its elements. Given a sequence with length $n$, it can be written as $\seq{x} = (x_i)_{i = 1}^n$, where $x_i$ denotes the $i$-th element.
A subsequence of a given sequence is derived by removing some elements of the original sequence under the premise that the order of the remaining elements is retained as the original sequence. For example, given a sequence of $(1,2,3,4,5)$, both $(2,3,4)$ and $(1,3,5)$ are subsequences of the original sequence. The definition of the longest common subsequence problem can be presented as shown below \cite{wagner1974string, cormen2009introduction}.
\begin{definition}[Longest Common Subsequence Problem]
\label{Def:LCSP}
The goal of a longest common subsequence problem is to find a sequence $\seq{x}^\star$ that satisfies the following conditions: (1) $\seq{x}^\star$ is a subsequence of each sequence in a set of sequences $\mathcal{X} = \left\{\seq{x}_1,\seq{x}_2...,\seq{x}_n\right\}$. (2) $\seq{x}^\star$ is the longest among all subsequences that satisfy condition (1). Then we consider $\seq{x}^\star$ as the longest common subsequence in $\mathcal{X}$.
\end{definition}

If the number of elements in objective set of sequences $\mathcal{X}$ in this problem is constant, then the solution of the problem is solvable in polynomial time using dynamic programming. Under a situation in which there are only two sequences in $\mathcal{X}$, the computational complexity is $O(n\times m)$ \cite{maier1978complexity}, where $n,m$ denotes the length of each sequence.

\subsection{Optimal transport (OT)}
\label{Sec:GW}
Optimal transport provides a means of comparing probability distributions, which are histograms in the finite dimensional case \cite{Villani_2008_OTBook_s, Peyre_2019_OTBook_s}. This quantity is defined over the {\it same ground space} or multiple pre-registered ground spaces. This comparison is interpreted as a {\it mass movement problem}, which seeks an optimum plan to move the mass from one distribution to the other at minimal cost. Actually, OT is rapidly gaining popularity in a multitude of machine learning problems, ranging from low-rank approximation \cite{Seguy_NIPS_2015_s}, to dictionary learning \cite{Rolet_AISTATSI_2016_s}, domain adaptation \cite{Courty_ECML_2014_s}, clustering \cite{Cuturi_ICML_2014_s}, semi-supervised learning \cite{Solomon_TOG_2016_s}, color transfer \cite{Fukunaga_arXiv_2021}, linear discriminative learning \cite{Kasai_ICASSP_2020}, and clustering \cite{fukunaga_ICPR_2020}.

We define two simplexes of histograms with $n_1$ and $n_2$ in the same metric space, which are defined, respectively, as 
$\Delta_{n_1} =\{\vec{p} \in \mathbb{R}_{{+}}^{n_1}; \sum_i p_i=1\}$, and 
$\Delta_{n_2} =\{\vec{q} \in \mathbb{R}_{{+}}^{n_2}; \sum_j q_j=1\}$, where in {\it mass movement problem}, $\vec{p},\vec{q}$ are usually regarded as mass vectors of each histogram, with elements denoting the mass of each bin. Subsequently, we define two probability measures as
$\mu=\sum_{i=1}^{n_1} p_i \delta_{x_i}$, and $\nu=\sum_{j=1}^{n_2} q_j \delta_{y_j}$,
where $x_i \neq x_j$ for $i\neq j$ is assumed without loss of generality. We also consider the {\it ground cost} matrix $\mat{C} \in \mathbb{R}_+^{n_1 \times n_2}$, where $\mat{C}(i,j)$ represents the transportation cost between the $i$-th and $j$-th element. The optimal transport problem between these two histograms is defined as 
\begin{eqnarray*}
\label{Eq:Tstar}
	\mat{T}{^*}(\mat{C},\vec{p},\vec{q}) 
	=\argmin_{\scriptsize \mat{T}\ \in\ \mathcal{U}_{n_1 n_2}}
	\langle \mat{T}, \mat{C}\rangle, 	
\end{eqnarray*}
where $\mathcal{U}_{n_1 n_2}$ is defined as 
\begin{eqnarray*}
\label{Eq:}
	\mathcal{U}_{ n_1n_2} := 
	\left\{
	\mat{T} \in \mathbb{R}^{n_1\times n_2}_+ : \mat{T} \vec{1}_{n_2} 
	= \vec{p},\ \mat{T}^T \vec{1}_{n_1} = \vec{q}
	\right\},
\end{eqnarray*}	
and $\mathcal{U}_{ n_1n_2}$ represents the polytope of $n_1\! \times\! n_2$ nonnegative matrices such that their row and column marginals are, respectively, equal to $p_i$ and $q_j$. This minimization problem involves a linear programming (LP) problem, i.e., a convex optimization problem. Then, the Wasserstein distance between two measures, denoted as $\mathcal{W}(\mu, \nu)$, is equal to the total distance traversed by the mass under the optimal transport plan $\mat{T}^*$.
Furthermore, by adding an entropic regularizer $H(\mat{T})=-\sum_{i=1}^{n_1} \sum_{j=1}^{n_2} \mat{T}(i,j)(\log (\mat{T}(i,j))-1)$, a solution of the entropically regularized optimal transport problem is solvable efficiently using Sinkhorn's fixed-point iterations \cite{Sinkhorn_PJM_1967_s,Cuturi_2013_NIPS_s,Benamou_2015_SIAMJSC_s}. If no prior information is known about a space, then 
we set $\vec{p}=\frac{1}{n_1} \vec{1}_{n_1}$ and $\vec{q}=\frac{1}{n_2} \vec{1}_{n_2}$. 

\section{Related Work}

For graph classification tasks, Graph Kernel methods have been used widely for several decades. They are also developing rapidly in recent years. Graph kernels are kernel functions that compute the similarity between two graphs. Graph kernel methods have shown effective performances in graph classification tasks using machine learning algorithms. In recent years, as effective performances of optimal transport theory in a machine learning domain, Graph Kernel methods are also improved greatly when combined with optimal transport theory. For now, Graph Kernel methods can be generally divided into two categories as methods of (1) Traditional Graph Kernel and (2) OT-based Graph Kernel.

The first of those classifications of methods include traditional Graph Kernels, most of which are based on the $\mathcal{R}$-convolution framework {\cite{Kriege_ANS_2020}}. To compute the similarity between graphs in various data mining tasks, the random walk kernel \cite{gartner2003graph} has been developed and used widely as an important tool for graph classification. This method is based on the counting of matching random walks in two graphs with a label or not. However, the random walk kernel faces a difficulty by which the computational cost is $O(n^6)$ for comparing a pair of graphs in graph product space, which is a non-negligible cost, especially for large-scale graphs. To resolve this difficulty, the use of the shortest path kernel \cite{borgwardt2005shortest} has been proposed. That method specifically examines a smaller set of shortest paths rather than random walks. Subsequent work on Weisfeiler--Lehman (WL) Graph Kernel \cite{Shervashidze_JMLR_2011_s} has brought great success. {They proposed a Graph Kernel with a similarity metric based on Weisfeiler--Lehman test}. Although this method yields attractive performance, it still presents difficulties of breadth-first strategies and $\mathcal{R}$-convolution methods as described in the Introduction section. {To provide a more valid notion of similarity of decomposite graph structure, related work \cite{kriege2016valid} proposed an optimal assignment kernel variant from Weisfeiler--Lehman Graph Kernel, which is Weisfeiler--Lehman Optimal Assignment (WL-OA) kernel. This kernel is based on an optimal bijection between parts of two graphs, which brings a better and more robust performance compared to the WL kernel.}

The second class includes Graph Kernels combined with optimal transport theory. Recent research \cite{togninalli2019Wasserstein} presents the Wasserstein-based Weisfeiler--Lehman Graph Kernel (WWL), which maps node embedding of a Weisfeiler--Lehman pattern to feature space and which computes kernel values using the Wasserstein distance of two point clouds in the feature space. They received better results than those yielded by the original Weisfeiler--Lehman kernel. GOT \cite{maretic2019got} uses optimal transport differently to compute the Wasserstein distance between two normal distributions derived by graph Laplacian matrices instead of generating walks or comparing vertex neighbors in graphs. 

Another report of recent work \cite{titouan2019optimal} raises difficulties that both the Wasserstein distance and the Gromov--Wasserstein distance are unable to accommodate the graph structure and feature information. To resolve this difficulty, they propose a notion of Fused Gromov--Wasserstein (FGW) distance, which considers both structure characteristics and feature information of two graphs. {To reduce computation costs for Wasserstein-distance-based Graph Kernel and to generate a graph embedding based on Wasserstein distance, related work \cite{kolouri2020wasserstein} proposes a new method that successfully embeds graphs into Euclidean space to compute approximate Wasserstein distance.}

{Aside from Graph Kernels, Graph Neural Networks (GNN) have also become a very hot topic in graph classification, node (link) prediction, and so on. A representative method of GNN in recent years is the Graph Convolution Network (GCN) \cite{kipf2016semi}, which is inspired by Convolution Neural Network (CNN) in computer vision. Another representative GNN is the Graph Isomorphism Network (GIN) \cite{xu2018powerful}. The GIN is based on a similar idea to Graph Kernels which specifically examines graph isomorphism. They prove the SUM aggregation to be a better aggregation scheme in message-passing of graphs and apply it in GIN, which has powerful performance in graph classification.}

\section{Longest Common Subsequence (LCS) Kernel} 
This section presents a metric between two labeled graphs through comparison of all the shortest path sequences. We will also present a fast kernel in the next section, which has drastically reduced computation. Therefore, we specifically designate this basic method in this section and its fast variant, respectively, as BLCS and FLCS.

\subsection{Basic concepts}
\label{Sec:BC}
This subsection first introduces several basic concepts in our LCS kernel. They show the operation of transforming the path into a manageable form of data, and present a formula of path sequence similarity used in graph comparison procedures.
\subsubsection{Shortest path serialization}
Given an undirected and connected graph $G = (\mathcal{V},\mathcal{E})$ with a set of vertices 
$\mathcal{V}=\left\{v_i\right\}_{i = 1}^{N}$ and a set of edges $\mathcal{E} = \left\{e_{ij}\right\}$, {where the $e_{ij}$ is equivalent to $e_{ji}$}, then both vertices and edges in $G$ are assigned a categorical label. To describe the subgraph structure of $G$, we chose the shortest path set which contains all shortest paths in $G$.

Let $\mathcal{P}$ denote the shortest path set of $G$, $\mathcal{P}$ defined as
\begin{equation*}
\setlength\abovedisplayskip{2pt}
\mathcal{P} \triangleq \left\{ p^\star_{i,j}| \forall v_i,v_j\in \mathcal{V}\right\},
\end{equation*}
where $p^\star_{i,j}$ represents the shortest path from vertex $v_i$ to $v_j$. Assuming that 
two vertices $v_k,v_l$ are in the shortest path $p^\star_{i,j}$, except for the start vertex $v_i$ and the 
end vertex $v_j$, then $p^\star_{i,j}$ can be expressed in sequence form:
{
\begin{eqnarray*}
\setlength\abovedisplayskip{3pt}
p^\star_{i,j}:(v_i,e_{i,k},v_k,e_{k,l},v_l,e_{l,j},v_j).
\end{eqnarray*}
}

Because of the difficulty in comparing paths directly, we perform a kind of {\it shortest path serialization} before path comparison, for which we serialize these paths as {\it label sequences}. We define the operation of this serialization as

\begin{definition}[Shortest Path Serialization]
\label{Def:SPS}
Let $l:\mathcal{V}\to \Sigma$ denote a function that maps a vertex object $v$ to its categorical 
node label assigned from a finite label alphabet $\Sigma$. Furthermore, let $w:\mathcal{E}\to \Sigma$ 
be the edge label mapping function.
{
Given a shortest path $p_{i,j}^\star$ from $v_i$ to $v_j$, and assuming that there are $N$ elements in $p_{i,j}^\star$ including vertices and edges, $p_{i,j}^\star(k)$ denotes the $k$-th element in $p_{i,j}^\star$.
The serialized shortest path sequence $\seq{x}$ is definable as
\begin{gather*}
	\seq{x}(k) = 
	\begin{cases}
		l(v_i), & k=1\cr
			\begin{cases}
		             l(p_{i,j}^\star(k)), & p_{i,j}^\star(k) {\rm\ is\ a\ vertex}\cr
		             -w(p_{i,j}^\star(k)), & p_{i,j}^\star(k) {\rm\ is\ an\ edge}
		        \end{cases},&1<k<N\cr
		l(v_j),& k=N.
	\end{cases}
\end{gather*}
In the special condition in which the graph has no edge label, $\seq{x}$ is
\begin{gather*}
	\seq{x}(k) =
	\begin{cases}
		l(v_i),&k=1\cr
		l(p_{i,j}^\star(2k-1)),&1<k<M\cr
		l(v_j),&k=M
	\end{cases}
\end{gather*}
where the length M of $\seq{x}$ is equal to $\frac{N+1}{2}$.
}
\end{definition}
It is noteworthy that we take the negative value of edge label as $-w(e_{i,j})$ so that edge labels are distinguished from node labels during path sequence comparison.

\subsubsection{Path sequence similarity}
Using shortest path serialization, we transform the path comparison problem to a sequence comparison problem. As described earlier, {to avoid the information loss problem, we use the LCS to compare path sequences, which compares paths with different lengths}
instead of simply judging whether or not these two paths are completely identical.

As sequence comparison strategies, many methods might be chosen, such as {\it longest common substring} (LCT) and {\it longest common subsequence} (LCS). The difference between them is that the {LCT} demands continuity of sequence above {LCS}. We prefer to use {LCS} rather than {LCT} because we discover {LCS} as typically more robust and stable than {LCT}. Assume three graphs with vertices labeled as graphs (a), (b) and (c) in Figure \ref{Fig.2}, where the vertices and edges in red denote one of the shortest paths in the graph. Generally, these three graphs should be regarded as mutually similar. However, the outputs of {LCS} and {LCT} differ considerably. Through shortest path serialization, these three shortest paths in graph (a), (b) and (c) are serialized as three path sequences $\seq{x}_a = (1,1,1,1,1)$, $\seq{x}_b = (1,1,1,2,1,1)$ and $\seq{x}_c = (1,1,1,1,1,1)$ respectively. The {comparison results} of {LCT} and {LCS} are shown in Table \ref{Tab.1}. It is apparent that, in the case of comparing $\seq{x}_a,\seq{x}_b$ and $\seq{x}_b,\seq{x}_c$, the length of {LCT} suddenly decreases because a point with the label of 2 suddenly shows up in $\seq{x}_b$. Consequently, {LCT} is more likely to be cut off by {mismatched} vertices.

\begin{figure}
\centering
\subfigure[]{
\includegraphics[width=0.3\textwidth]{./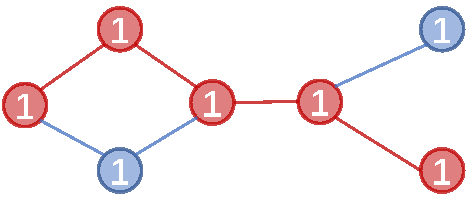} 
}
\subfigure[]{
\includegraphics[width=0.3\textwidth]{./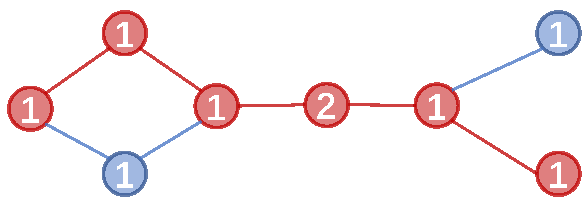} 
}
\subfigure[]{
\includegraphics[width=0.3\textwidth]{./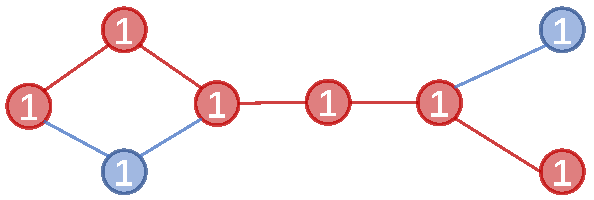} 
}
\caption{Three similar graphs with vertices labeled, where the vertices and edges in red denote one of the shortest paths in the graph. It is noteworthy that the difference between (b) and (c) is that (b) has a vertex with label ``2'' in the central position.} 
\label{Fig.2} 
\end{figure}

\begin{table}
\centering
\caption{{Longest common substring and subsequence of $\seq{x}_a$, $\seq{x}_b$ and $\seq{x}_c$. Each item in these tables shows comparison results between two among $\seq{x}_a$, $\seq{x}_b$ and $\seq{x}_c$ derived from examples in Figure \ref{Fig.2}. Table (a) shows them in the form of sequence. Table (b) shows their length.}} 
\label{Tab.1} 
\subfigure[{Longest common parts}]{
\begin{tabular}{cccc}
\hline
Methods&$\seq{x}_a,\seq{x}_b$&$\seq{x}_a,\seq{x}_c$&$\seq{x}_b,\seq{x}_c$\\
\hline
LCT&(1,1,1)&(1,1,1,1,1)&(1,1,1)\\
LCS&(1,1,1,1,1)&(1,1,1,1,1)&(1,1,1,1,1)\\
\hline
\end{tabular}
}

\subfigure[{Length of longest common parts}]{
\begin{tabular}{cccc}
\hline
Methods&$\seq{x}_a,\seq{x}_b$&$\seq{x}_a,\seq{x}_c$&$\seq{x}_b,\seq{x}_c$\\
\hline
LCT&3&5&3\\
LCS&5&5&5\\
\hline
\end{tabular}
}
\end{table}
Using the length of LCS, we propose our formula of path sequence similarity as
\begin{definition}[Path Sequence Similarity]
\label{Def:PSS}
Given two path sequences $\seq{x}_1,\seq{x}_2 \in \mathcal{X}$, their similarity $F_{\rm sim}: \mathcal{X} \times \mathcal{X} \to \mathbb{R}  $ is defined as
\begin{equation}
\label{Eq:PSS}
F_{\rm sim}(\seq{x}_1,\seq{x}_2):= \frac{F_{\rm lcs}(\seq{x}_1,\seq{x}_2)}{\max(|\seq{x}_1|,|\seq{x}_2|)},
\end{equation}
where {$F_{\rm lcs}:\mathcal{X}\times\mathcal{X}\to\mathbb{R}$} represents the function that returns the length of the Longest 
Common Subsequence of $\seq{x}_1,\seq{x}_2$.
\end{definition}
We use the maximum length of the objective path 
sequences as a denominator to limit its value in $[0,1]$, which prevents the Sinkhorn algorithm from diverging when computing optimal transport plans. {This is true because, if we have a cost $\mat{C}_{i,j}$ with an overly large value, the kernel $\mat{K}_{i,j} = e^{-\frac{\mathbf{C}_{i,j}}{\epsilon}}$ in the Sinkhorn algorithm will become too negligible to be stored in memory as positive numbers, which causes endless iterations, as described in Remark 4.7 in Peyr{\'e}'s work \cite{Peyre_2019_OTBook_s}.} When two path sequences have a 
longer common subsequence, the value of their similarity will be larger. From the perspective of the
graph, a large similarity of path sequences shows a large similarity of subgraph structures.

\subsection{BLCS Kernel in graph comparison}
{The graph comparison of BLCS kernel is divided into the following steps.}

\subsubsection{Step 1: Graph representation generation}
Given two undirected and connected graphs $G_1(\mathcal{V}_1,\mathcal{E}_1)$ and $G_2(\mathcal{V}_2,\mathcal{E}_2)$ with nodes and edges labeled, then using a classical algorithm like Floyd--Warshall or the Dijkstra algorithm,
the shortest path sets $\mathcal{P}_1$ and $\mathcal{P}_2$ of $G_1$ and $G_2$ are obtainable, respectively.
Through shortest path serialization, we first transform all paths in $\mathcal{P}_1$ and $\mathcal{P}_2$ as shown below. 
\begin{equation*}
\setlength\abovedisplayskip{3pt}
\begin{aligned}
	\mathcal{X}_1=\left\{ (\seq{x}_1)_{i,j} |(\seq{x}_1)_{i,j} = F_{\rm serialize}(p_{i,j}^\star),p_{i,j}^\star\in \mathcal{P}_1\right\}\cr
	\mathcal{X}_2=\left\{ (\seq{x}_2)_{i,j} |(\seq{x}_2)_{i,j} = F_{\rm serialize}(p_{i,j}^\star),p_{i,j}^\star\in \mathcal{P}_2\right\}	
\end{aligned}
\end{equation*}
Therein, $F_{\rm serialize}$ denotes the operation of the shortest path serialization described earlier. $\mathcal{X}_1$ and $\mathcal{X}_2$ respectively represent two path sequence sets derived from $\mathcal{P}_1$ and $\mathcal{P}_2$, respectively describing the subgraph structures of $G_1$ and $G_2$. For every path sequence $(\seq{x}_1)_{i,j}$ and $(\seq{x}_2)_{i,j}$, the superscript denotes the graph to which they belong. {In a special case in which there is not only one shortest path existing for two certain vertices, we use only one of them randomly. Specifically, we use the Floyd--Warshall algorithm to extract the shortest path sequences, which are based on dynamic programming. In this situation, the extracted shortest path of two vertices will be the first found during dynamic programming, although other shortest paths might exist.}

It is noteworthy that some path sequences are exactly the same as those in the path sequence set. To avoid an increase of identical sequences, we do not add them to the sequence set $\mathcal{X}$ and store the number of identical sequences in a mass vector $\vec{m}$. For example, the $i$-th element in $\vec{m}$ denotes the number of sequences that are identical to the $i$-th path sequence in $\mathcal{X}$. In doing so, we obtain two mass vectors $\vec{m}_1$ and $\vec{m}_2$, respectively belonging to $\mathcal{X}_1$ and $\mathcal{X}_2$. 

\subsubsection{Step 2: Wasserstein Distance Computation in LCS Metric Space}
To compare the path sequence sets $\mathcal{X}_1$ and $\mathcal{X}_2$, we propose a metric space of path sequence over the union of $\mathcal{X}_1$ and $\mathcal{X}_2$, where we use the path sequence similarity to define the metric.
\begin{definition}[LCS Metric Space]
\label{Def:SSS}
Given two path sequence sets $\mathcal{X}_1,\mathcal{X}_2$, and $\mathcal{X} := \mathcal{X}_1 \cup \mathcal{X}_2$ is their union. The LCS metric space of $\mathcal{X}_1$ and $\mathcal{X}_2$ is written as $S(\mathcal{X},d)$, where the metric $d:\mathcal{X} \times \mathcal{X} \to \mathbbm{R}$ is defined as presented below.
\begin{equation}
\setlength\abovedisplayskip{2pt}
\label{Eq:LCSmetric}
	d(\seq{x}_1,\seq{x}_2) = 1- F_{\rm sim}(\seq{x}_1,\seq{x}_2),
\end{equation}
where $\seq{x}_1\in \mathcal{X}_1,\seq{x}_2 \in \mathcal{X}_2$.
\end{definition}

\begin{theorem}
\label{Theo:metric}
The metric function $d:\mathcal{X} \times \mathcal{X} \to \mathbb{R}$ of LCS metric space is a metric.
\end{theorem}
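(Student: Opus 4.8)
The plan is to verify the four defining properties of a metric for $d$: non-negativity, identity of indiscernibles, symmetry, and the triangle inequality. The first three follow from elementary properties of $F_{\rm lcs}$, so I would dispatch them quickly; the triangle inequality is the substantive part and I expect it to be the main obstacle. Throughout I abbreviate $a={\rm len}(\vec{x}_1)$, $b={\rm len}(\vec{x}_2)$, $c={\rm len}(\vec{x}_3)$ and write $p=F_{\rm lcs}(\vec{x}_1,\vec{x}_2)$, $q=F_{\rm lcs}(\vec{x}_2,\vec{x}_3)$, $r=F_{\rm lcs}(\vec{x}_1,\vec{x}_3)$.

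First I would record the basic bound that any common subsequence of two sequences has length at most the smaller of the two lengths, so $F_{\rm lcs}(\vec{x}_1,\vec{x}_2)\le \min({\rm len}(\vec{x}_1),{\rm len}(\vec{x}_2)) \le \max({\rm len}(\vec{x}_1),{\rm len}(\vec{x}_2))$. Hence $F_{\rm sim}\in[0,1]$ and $d\ge 0$. For identity of indiscernibles, $d(\vec{x}_1,\vec{x}_2)=0$ forces $F_{\rm lcs}(\vec{x}_1,\vec{x}_2)=\max({\rm len}(\vec{x}_1),{\rm len}(\vec{x}_2))$; combining this with $F_{\rm lcs}\le\min(\cdot,\cdot)$ gives equal lengths and a common subsequence that exhausts both sequences, whence $\vec{x}_1=\vec{x}_2$; the converse is immediate since a sequence is its own longest common subsequence with itself. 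Symmetry is clear because $F_{\rm lcs}$ and $\max$ are both symmetric in their arguments, so $F_{\rm sim}$ and therefore $d$ are symmetric.

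The crux is the triangle inequality $d(\vec{x}_1,\vec{x}_3)\le d(\vec{x}_1,\vec{x}_2)+d(\vec{x}_2,\vec{x}_3)$, which after clearing the $1-F_{\rm sim}$ terms is equivalent to $p/\max(a,b)+q/\max(b,c)-r/\max(a,c)\le 1$. The key tool I would prove first is a superadditivity lemma for the LCS length: for any three sequences, $r\ge p+q-b$. To see this, fix an optimal alignment realizing $p$ and let $A\subseteq\{1,\dots,b\}$ be the positions of $\vec{x}_2$ it uses, and similarly let $B$ be the positions used by an optimal alignment realizing $q$, so $|A|=p$ and $|B|=q$. By inclusion–exclusion $|A\cap B|\ge p+q-b$, and the characters of $\vec{x}_2$ indexed by $A\cap B$ form, in order, a subsequence of $\vec{x}_1$ (since $A\cap B\subseteq A$) and of $\vec{x}_3$ (since $A\cap B\subseteq B$); hence they are a common subsequence of $\vec{x}_1,\vec{x}_3$, giving $r\ge|A\cap B|\ge p+q-b$.

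With the lemma in hand, I would split into two cases. If $p+q\le b$, then dropping the nonpositive term $-r/\max(a,c)$ and using $\max(a,b)\ge b$, $\max(b,c)\ge b$ yields $p/\max(a,b)+q/\max(b,c)\le (p+q)/b\le 1$. If $p+q> b$, I would use $r\ge p+q-b>0$ together with a reduction by symmetry (the inequality is invariant under swapping $\vec{x}_1\leftrightarrow\vec{x}_3$, so assume $a\le c$, making $\max(a,c)=c$) and then treat the three orderings $b\le a\le c$, $a\le b\le c$, $a\le c\le b$ separately; in each subcase the relevant $\min$-bounds on $p,q$ reduce the claim to an elementary inequality in $a,b,c$ (for instance the middle subcase collapses to $(a-b)(c-b)\le 0$). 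The main difficulty is precisely this last case: managing the three distinct $\max$-denominators simultaneously, which is why the superadditivity lemma plus a careful case analysis on the ordering of the lengths is needed rather than a one-line argument.
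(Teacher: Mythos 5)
Your proof is correct, and at its core it rests on the same key fact as the paper's proof: the inequality $F_{\rm lcs}(\vec{x}_1,\vec{x}_2)+F_{\rm lcs}(\vec{x}_2,\vec{x}_3)-F_{\rm lcs}(\vec{x}_1,\vec{x}_3)\le {\rm len}(\vec{x}_2)$, i.e.\ your superadditivity lemma $r\ge p+q-b$. The difference is in what is proved and how. The paper does not prove this inequality at all; it imports it as Lemma~3.3/Corollary~3.4 of Bakkelund's work on LCS-based string metrics. You prove it yourself, via the inclusion--exclusion argument on the index sets $A,B\subseteq\{1,\dots,b\}$ of positions of $\vec{x}_2$ used by the two optimal alignments; this is exactly the standard argument, it is correct, and it makes your proof self-contained where the paper's is not. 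Conversely, in passing from the lemma to the triangle inequality, the paper avoids your case analysis: writing $M(\vec{x},\vec{y})=\max({\rm len}(\vec{x}),{\rm len}(\vec{y}))$ and assuming ${\rm len}(\vec{x}_1)\le{\rm len}(\vec{x}_3)$ by symmetry, it multiplies the two sides of the lemma's inequality by $M(\vec{x}_1,\vec{x}_2)$ and $M(\vec{x}_2,\vec{x}_3)$ respectively (valid since $M(\vec{x}_1,\vec{x}_2)\le M(\vec{x}_2,\vec{x}_3)$ and the right-hand side is nonnegative), then uses ${\rm len}(\vec{x}_2)\le M(\vec{x}_1,\vec{x}_2)$ and $M(\vec{x}_1,\vec{x}_3)\le M(\vec{x}_2,\vec{x}_3)$, and divides through by $M(\vec{x}_1,\vec{x}_2)M(\vec{x}_2,\vec{x}_3)$; this yields the triangle inequality in a single algebraic pass, with no split on the sign of $p+q-b$ and no ordering cases for $a,b,c$. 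Your two-case, three-ordering analysis does go through --- the subcases you only sketch check out: $b\le a\le c$ reduces to $\tfrac{p}{a}+\tfrac{b-p}{c}\le 1$, the middle ordering collapses to $(a-b)(c-b)\le 0$ as you say, and $a\le c\le b$ is where the hypothesis $p+q>b$ is genuinely needed --- but it is bulkier and leaves more routine verification to the reader. (Both proofs, yours and the paper's, implicitly use that sequences are nonempty so that all denominators are positive; the paper notes this explicitly at the end of its argument.) In short: same decomposition into the three metric axioms and the same key lemma, but you supply a proof of the lemma that the paper merely cites, while the paper's max-factor manipulation replaces your case analysis with a slicker uniform derivation.
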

$\mathcal{X}_1$ and $\mathcal{X}_2$ can be regarded as two distributions in their LCS metric space, where the path sequence is a discrete point of these distributions. Finally, we propose to leverage the Wasserstein distance between these two distributions to define our graph distance as

\begin{definition}[LCS Graph Distance]
Given two undirected and connected graphs $G_1(\mathcal{V}_1,\mathcal{E}_1), G_2(\mathcal{V}_2,\mathcal{E}_2) \in \mathcal{G}$ and their respective path sequence sets $\mathcal{X}_1$ and $\mathcal{X}_2$ and the mass vectors $\vec{m}_1$ and $\vec{m}_2$, then the LCS graph distance $d_G: \mathcal{G} \times \mathcal{G}\to \mathbb{R}$ between $G_1$ and $G_2$ is defined as
\begin{equation*}
\label{Eq.LCSGD}
d_{G}(G_1,G_2) = \mathcal{W}_1(\mathcal{X}_1,\mathcal{X}_2) = \left\langle \mat{T}^\star\left(\mat{D},\frac{\vec{m}_1}{\|\vec{m}_1\|_1},\frac{\vec{m}_2}{\|\vec{m}_2\|_1}\right),\mat{D} \right\rangle
\end{equation*}
where $\mathcal{W}_1(\mathcal{X}_1,\mathcal{X}_2)$ represents the 1-Wasserstein distance between distribution $\mathcal{X}_1$ and $\mathcal{X}_2$, and where $\mat{D}$ denotes the ground distance matrix including distances $d(\seq{x}_1,\seq{x}_2)$ between $\seq{x}_1 \in \mathcal{X}_1$ and $\seq{x}_2 \in \mathcal{X}_2$ in the LCS metric space $S(\mathcal{X}_1\cup \mathcal{X}_2,d)$. Here, $\mat{T}^\star\left(\mat{D},\frac{\vec{m}_1}{\|\vec{m}_1\|_1},\frac{\vec{m}_2}{\|\vec{m}_2\|_1}\right)$ denotes the optimal transport plan w.r.t. mass vectors $\vec{m}_1$ and $\vec{m}_2$ of $\mathcal{X}_1$ and $\mathcal{X}_2$, respectively, and where $\mat{D}$, where $\|\cdot\|_1$ denotes the $l_1$ norm.
\end{definition}

We use the LCS graph distance to compute a similarity measure between two graphs. The measure will be used as a kernel value in the machine learning algorithm. Here we combine the LCS graph distance with the Laplacian kernel function to construct a Graph Kernel. The LCS Graph Kernel is then defined as presented below. The proof of Theorem \ref{Theo:metric} is provided in Supplementary Materials.

\begin{definition}[LCS Graph Kernel]
Given a set of graphs $\mathcal{G}$, then $G_i$ and $G_j$ are two arbitrary graphs in $\mathcal{G}$. The LCS Graph Kernel $k:\mathcal{G}\times\mathcal{G}\to\mathbb{R}$ is defined as
\begin{equation}
\label{Eq:LCSK}
k_{\rm LCS}(G_i, G_j)=e^{-\lambda d_G(G_i,G_j)},
\end{equation}
where $d_G$ denotes the LCS graph distance, and where $\lambda$ is within the range of $(0,+\infty]$.
\end{definition}

{Aside from LCS metric, one can readily replace $d:\mathcal{X}\times\mathcal{X}\to \mathbb{R}$ with another string (sequence) metric such as Hamming distance or Levenshtein distance to construct a new metric space. We provide a variant kernel by applying the Levenshtein distance and comparing it with our proposed LCS Graph Kernel in numerical experiments. One can refer to Section ``Variant Graph Kernels from LCS Kernel'' and ``Numerical Experiments'' for details.}

\subsection{Time complexity}
One can discuss the time complexity of the BLCS kernel by dividing the algorithm into two parts. In the first part, the algorithm first generates a path sequence set for graph representation. In the BLCS kernel, we use all shortest paths in an undirected and connected graph. Assuming that the objective graph has $N$ vertices, then the time complexity for graph representation generation is $O(N^3)$ for each graph {if we use the Floyd--Warshall algorithm for shortest path searching}. Furthermore, $N^2$ path sequences are obtained after graph representation generation.
 In the second part, the algorithm performs a one-to-one comparison of path sequences of two graphs to compute the ground distance matrix for calculating the Wasserstein distance. {Using dynamic programming, one iteration of LCS comparison of two sequences with respective lengths $n$ and $m$ costs computation in $O(n \times m)$ \cite{wagner1974string}. Therefore, the computational cost of each comparison is less than $O(l^2)$, where $l$ denotes the longest length $l$ among all path sequences}. Under this situation, performing a one-to-one comparison will require computation in $O(N^4l^2)$.
 
\subsection{Kernel validity}
{It is noteworthy that the positive definiteness of the LCS Graph Kernel is not proven currently.}
The literature shows that only kernels using Wasserstein distance with discrete metrics have been proven as valid kernels \cite{gardner2015earth}, which is unsuitable for our kernel function. {However, many studies have examined machine learning based on indefinite kernels such as Kre{\u\i}n SVM \cite{loosli2015learning}, which shows that some indefinite kernel functions can still be applied to machine learning, where the indefinite parts are treated as noise. In addition, our experiments using small datasets and large datasets have demonstrated that the indefinite parts have no great effect on performance. For verification, we provide an experiment to demonstrate the feasibility of our LCS Graph Kernel.} Specifically, we follow the idea of Kre{\u\i}n SVM, which first removes all negative eigenvalues of the Gramian matrix and then reverts it to obtain a new Gramian matrix with no negative eigenvalue. We perform the same operation on our BLCS and FLCS kernel, and then compare the performances of the original Gramian matrix and the new one from which negative eigenvalues have been removed. For details of experimentation and results, please refer to the numerical evaluation section.

\section{Fast LCS Kernel}
We present a fast LCS (FLCS) Graph Kernel based on adjacent point merging in the LCS metric space. It is noteworthy that this greatly reduces computational costs while preserving (sometimes improving) the BLCS kernel accuracy.

\subsection{Improving strategies}
Through comparison of shortest path sequences in a single graph, we discovered that many similar but not completely identical shortest path sequence samples are repeated. This repetition might lead to unnecessarily redundant calculations for comparison. Moreover, the finally obtained ground distance matrix might exacerbate the degradation of classification performances. To alleviate these issues in the BLCS kernel, we propose two strategies: fragmented path sequence removal and adjacent point merging, and specific examination of reduction of the path sequence set size.

\subsubsection{Fragmented path sequence removal}
The fragmented path sequences are those sequences having negligible length, which are short because they carry only limited information. Dealing with these path sequences might not only cause unnecessary computation; it might also lead to negative effects on Wasserstein distance calculation. Therefore, we perform fragmented path sequence removal as preprocessing.

Given a path sequence set $\mathcal{X}$ with $N^2$ shortest path sequences included, we {modify} $\mathcal{X}$ with length-limited conditions as
\begin{equation*}
\setlength\abovedisplayskip{2pt}
\label{Eq.LenR}
	\mathcal{X}_{\rm new} = \left\{\seq{x}_i|\seq{x}_i \in \mathcal{X}, |\seq{x}_i| \geq \rho \cdot L_{\rm max}\right\},
\end{equation*}
where $L_{\rm max}$ denotes the length of the longest one among all path sequences in $\mathcal{X}$, and where $\rho$ is the parameter of the removal ratio, valued within as $(0,1)$. When $\rho=0$, all of 
the path sequences in original $\mathcal{X}$ will be reserved. When $\rho=1$, only the path sequences having the longest length will be reserved.

\subsubsection{Adjacent point merging in LCS metric space}
As described above, path sequences in the original $\mathcal{X}$ are redundant to a certain degree. We can observe this point in the LCS metric space as Figure \ref{Fig.3a}, where points in distribution are usually numerous and dense. Actually, when we map an original shortest path sequence set of a graph to the LCS metric space, these points in the distribution are usually close to a local center. They are divisible into several parts.

\begin{figure}
\setlength{\belowcaptionskip}{-0.3cm} 
\centering
\subfigure[Before merging]{
\label{Fig.3a}
\includegraphics[width=0.3\textwidth]{./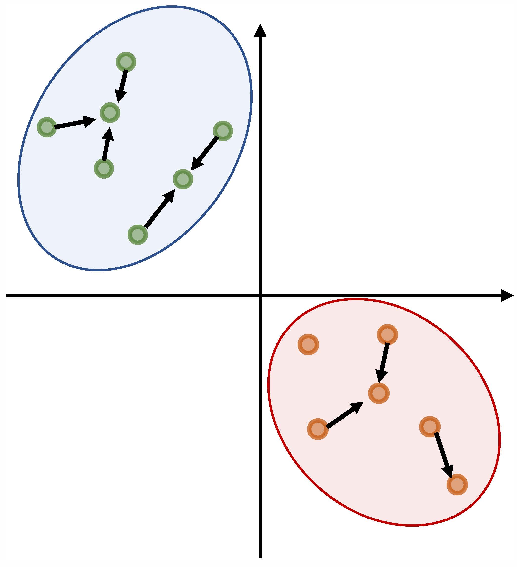} 
}
\subfigure[After merging]{
\label{Fig.3b}
\includegraphics[width=0.3\textwidth]{./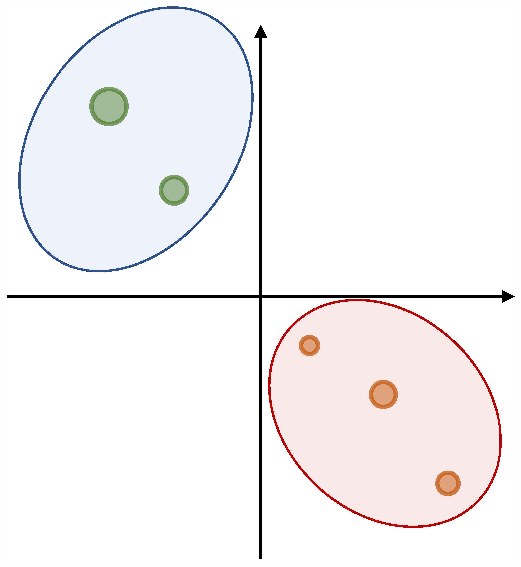} 
}
\caption{Visual summary of adjacent point merging, where blue and red areas respectively denote two distributions in LCS metric space and where the point size shows their mass: (a) two distributions before point merging, arrows indicate the merging direction; and (b) the same distributions after merging, where the adjacent points are combined, as are their masses.}  
\label{Fig.3} 
\end{figure}

Consequently, with the aim of reducing the density of points in metric space, {we can apply some clustering methods or point cloud sampling methods. Several methods have been tested, including the $k$-medoids algorithm \cite{maranzana1963location, park2009simple}, agglomerative clustering algorithm \cite{rokach2005clustering}, and farthest point sampling \cite{moenning2003fast}. For clustering methods, we found that they have a high computational cost for optimizing the solution, which is not worth the loss. For point cloud sampling method such as farthest point sampling, it has low computational cost compared to clustering methods. However, we found that during its sampling process, the information of many unselected points is abandoned. Their mass is lost. Actually, the Wasserstein distance considers the transportation of mass of discrete points in distribution, which means that the mass of unselected points can be used in this situation. Therefore,} we propose a {mass-transfer} operation of adjacent point merging. For this purpose, we follow the same strategy of recalculation of the mass vector as that used for BLCS. As shown in Figure \ref{Fig.3}, points in distributions are merged to their local center. Masses of merged points will also be accumulated to the local center. They appear to grow larger in Figure \ref{Fig.3b}.

\begin{definition}[Adjacent Point Merging]
\label{Def:APM}
Given a discrete distribution $\mathcal{X}$ in LCS metric space $S(\mathcal{X},d)$, with parameter $s$ as the merging radius, then each point in $\mathcal{X}$ is assigned an equivalent mass $1$. Subsequently, we define a set $\mathcal{X}^\prime$ to store local center points and to add a random point in it. For the remainder of points in $\mathcal{X}$, we define the merging attempt $F_{\rm attempt}:\mathcal{X}\to\{0,1\}$ as shown below.
\begin{equation*}
\setlength\abovedisplayskip{2pt}
F_{\rm attempt}(\seq{x}) =
\begin{cases}
1,&\min\limits_{\scriptsize \seq{x}^\prime \in \mathcal{X}^\prime} d(\seq{x},\seq{x}^\prime) \leq s\cr
0,&\min\limits_{\scriptsize \seq{x}^\prime \in \mathcal{X}^\prime} d(\seq{x},\seq{x}^\prime) > s.
\end{cases}
\end{equation*}
We perform merging attempts on points in the original distribution. If we obtain $F_{\rm attempt}(\seq{x}) = 1$, then $\seq{x}$ will be merged with the local center $\seq{x}^\prime = \argmin\limits_{\scriptsize \seq{x}^\prime \in \mathcal{X}^\prime}d(\seq{x},\seq{x}^\prime)$, with their masses combined as $m^\prime := m^\prime + m$, where $m^\prime,m$ denotes the mass of $\seq{x}^\prime,\seq{x}$. If we get $F_{\rm attempt}(\seq{x}) = 0$, then $\seq{x}$ will be regarded as a new local center and added to $\mathcal{X}^\prime$. Eventually, a merged distribution $\mathcal{X}^\prime$ with a mass vector $\vec{m}$ of all remaining points is obtained.
\end{definition}

As a result of adjacent point merging, Figure \ref{Fig.3b} shows that the original dense distributions are converted to new distributions including fewer points with combined mass. After obtaining $\mathcal{X}$ and $\vec{m}$, the FLCS kernel follows the same step of the BLCS kernel to compute kernel values.

\subsection{Time complexity}
Similarly to the BLCS kernel, we discuss the FLCS kernel time complexity under the same assumption. In the first part, for adjacent point merging, we consider the worst situation with $\frac{N(N+1)}{2}$ times of comparison. An adjacent point combination will cost a maximum computational cost in $O(N^2l^2)$. In the second part, assuming that $K$ path sequences remain after adjacent point merging, the path sequence comparison of two graphs will cost $O(K^2l^2)$. The time complexity for the FLCS kernel is $O(N^2l^2 + K^2l^2)$. This time complexity is reduced dramatically by comparison to $O(N^4)$ in the BLCS kernel. We also conduct numerical evaluations to compare elapsed times with other state-of-the-art methods, as presented in the next section.

\section{Variant Graph Kernels from LCS Kernel}
{This section presents some variant Graph Kernels from our proposed LCS Graph Kernel for ablation studies. These variant Graph Kernels include implementations of LCS kernels based on the $\mathcal{R}$-convolution framework, or with a different sequence (or string) comparing algorithm, or computing the optimal transport plan based on a metric of the sequence length difference. Our numerical evaluation is expected to compare the performance of these variant Graph Kernels with our proposed LCS kernel.}

\subsection{FLCS kernel based on the $\mathcal{R}$-convolution framework}
To show improvement of OT framework (Wasserstein distance) {compared} to the $\mathcal{R}$-convolution framework, we implement our FLCS kernel based on the $\mathcal{R}$-convolution framework, which we call FLCS-R Graph Kernel. The FLCS-R Graph Kernel on two graphs $G_1, G_2$ is computed as
\begin{eqnarray*}
	k_{\rm FLCS-R}(G_1, G_2) = \sum_{\seq{x}_1\in\mathcal{X}_1, \seq{x}_2\in\mathcal{X}_2} F_{\rm sim}(\seq{x}_1, \seq{x}_2),
\end{eqnarray*}
where $\mathcal{X}_1, \mathcal{X}_2$ is respective path sequence sets of $G_1,G_2$, and $F_{\rm sim}$ is the path sequence similarity in Eq. (\ref{Eq:PSS}). 

\subsection{Levenshtein kernel}
As described in the earlier section, the graph comparison problem can be transformed into a sequence (or string) comparison problem. However, there are many choices {for sequence (or string) metrics}. To show differences between LCS and other algorithms, we implement a Levenshtein Graph Kernel which uses sequence metrics based on the Levenshtein distance \cite{levenshtein1966binary}. The Levenshtein Graph Kernel shares the same framework as FLCS Graph Kernel, which also uses the OT framework and {adjacent point merging operation}. For the Levenshtein kernel, we construct the metric space using the following metric
\begin{eqnarray*}
	d_{\rm L}(\seq{x}_1, \seq{x}_2) = \frac{F_{\rm Levenshtein}(\seq{x}_1, \seq{x}_2)}{\max(|\seq{x}_1|, |\seq{x}_2|)},
\end{eqnarray*}
where $\seq{x}_1, seq{x}_2$ denotes two sequences, and where $F_{\rm Levenshtein}$ is the function of computing the Levenshtein distance of two sequences.

\subsection{FLCS kernel with no label}
Under some extreme conditions in which the graphs have no vertex label or edge label or have only one label, the comparison of LCS will degenerate into the comparison of sequence lengths. Considering these conditions, we implement a version of the FLCS kernel which only compares the lengths of sequences. This implementation is similar to the shortest path kernel with adjacent point merging and the OT framework. We construct the metric space for this no-label version using the following metric
\begin{eqnarray*}
	d_{\rm Len}(\seq{x}_1, \seq{x}_2) = \frac{||\seq{x}_1| - |\seq{x}_2||}{\max(|\seq{x}_1|, |\seq{x}_2|)},
\end{eqnarray*}
where $\seq{x}_1, \seq{x}_2$ denotes two sequences, the absolute value of their difference is divided by the length of the longest among them.

\section{Numerical Evaluation}
This section compares our method with other state-of-the-art graph classification methods by the performance of numerical experiments. Particularly, this evaluation reveals that the proposed BLCS and FLCS kernels outperform many state-of-the-art methods in the most widely used datasets.

\subsection{Datasets}
For real-world datasets, we use several widely used benchmark datasets. Among them, MUTAG \cite{debnath1991structure}, PTC-FM, PTC-MR \cite{helma2001predictive}, AIDS \cite{zaharevitz2015aids} and {UACC257 \cite{yan2008mining}} include graphs with discrete labels of both vertices and edges. Also, MSRC-9, MSRC-21, MSRC-21C \cite{neumann2016propagation} include graphs with discrete vertex labels. The BZR, COX2 \cite{sutherland2003spline}, ENZYMES \cite{schomburg2004brenda}, and PROTEINS \cite{borgwardt2005protein} datasets include graphs with discrete and continuous vertex attributes. In addition, BZR-MD, COX2-MD, and ER-MD \cite{sutherland2003spline} include graphs with discrete labels of vertices and with both discrete and continuous attributes of edges. The IMDB-B and IMDB-M \cite{yanardag2015deep} datasets include both unlabeled and unattributed graphs. The datasets above are available in Tudataset \cite{Morris+2020}. Also, ogbg-molhiv is a large-scale dataset from the Open Graph Benchmark \cite{hu2020open}, including graphs with both discrete vertex and edge labels; each vertex has a nine-dimensional node feature.

\begin{table}[H]
\centering
\caption{Statistical information of datasets}
\label{Tab:data1}
\begin{tabular}{lccccc}
\toprule
Dataset&Category&Graphs&Classes&Avg. $|\mathcal{V}|$&Avg. $|\mathcal{E}|$\\
\midrule
MUTAG&bio&188&2&17.93&19.79\cr
PTC-FM&bio&349&2&14.11&14.48\cr
PTC-MR&bio&344&2&14.29&14.69\cr
AIDS&bio&2000&2&15.69&16.20\cr
BZR&bio&405&2&35.75&38.36\cr
COX2&bio&467&2&41.22&43.45\cr
ENZYMES&bio&600&6&32.63&62.14\cr
PROTEINS&bio&1113&2&39.06&72.82\cr
BZR-MD&bio&405&2&21.30&225.06\cr
COX2-MD&bio&303&2&26.28&335.12\cr
ER-MD&bio&446&2&21.33&234.85\cr
UACC257&bio&39988&2&26.09&28.12\cr
ogbg-molhiv&bio&41127&2&25.50&27.50\cr
\midrule
MSRC-9&cv&221&8&40.58&97.94\cr
MSRC-21&cv&563&20&77.52&198.32\cr
MSRC-21C&cv&209&20&40.28&96.60\cr
\midrule
IMDB-B&soc&1000&2&19.77&96.53\cr
IMDB-M&soc&1500&3&13.00&65.94\cr
\bottomrule
\end{tabular}
\end{table}

Table \ref{Tab:data1} shows specific information of datasets that we use. We divide datasets into three categories according to their sources as {\it bio}, {\it cv}, and {\it soc}, respectively denoting bioinformatics, computer vision, and social network. The Avg.$|\mathcal{V}|$ and Avg.$|\mathcal{E}|$ columns respectively show the average number of vertices and edges in a dataset, where it is apparent that the datasets BZR-MD, COX2-MD, and ER-MD are dense graph datasets.

Because our proposed method only supports graphs with a {single} discrete label of vertices and/or edges, we apply some preprocessing before experiments: (1) We remove continuous attributes from original graphs, making sure that only discrete labels remain. (2) For those unlabeled graphs, we use the degree measures of the respective vertices as attributes. {(3) For those graphs with discrete multi-label or multi-dimensional features, we apply a perfect hash to map this feature to a single discrete label.}

\subsection{Experiment setup}
For numerical experiments, we test both BLCS and FLCS kernels to evaluate our improved strategies. {Moreover, we test all variant Graph Kernels from LCS kernel for ablation studies, which includes the FLCS-R Graph Kernel, Levenshtein Graph Kernel (LevenshteinK), and FLCS kernel with no label (FLCS-Len).}  For the FLCS kernel and its variant kernels, we set up a grid search to find the best pair of parameters, where the removal ratio $\rho$ is within {$\{0, 0.2\}$} and the merging radius $s$ is within {$\{0.2, 0.5\}$}.

We compare results obtained using our methods with those from other methods of {four} categories: (1) traditional Graph Kernel, (2) OT-based Graph Kernel, (3) graph-embedding method, and {(4) Graph Neural Network (GNN)}. In the first category, we chose the Weisfeiler--Lehman Kernel (WLK), Weisfeiler--Lehman Optimal Assignment Kernel (WL-OA), and Shortest Path Kernel (SPK), which we described in the related work section. For the implementation of all the traditional Graph Kernels, we use the Grakel python library \cite{siglidis2020grakel}. For the second category, we chose the Wasserstein--Weisfeiler--Lehman Kernel (WWL) and Fused Gromov--Wasserstein Kernel (FGW). For the third category, we chose the Anonymous Walk Embeddings (AWE) \cite{ivanov2018anonymous}, which is a graph-embedding method based on anonymous walks in a graph. {For the last category, we chose the Graph Isomorphism Network (GIN) \cite{xu2018powerful}.}
{Parameter $H$ in WLK, WL-OA and WWL is adjusted using grid-search, {where we seek} the best one from 1 to 10}. The shortest path matrix {and Hamming distance measure} are used in FGW. {Moreover, the parameter alpha of FGW is adjusted using grid-search on $\{0.2,0.5,0.8\}$. For the shortest path kernel, we use the non-label implementation for unattributed graphs, and with label implementation for the attributed dataset, which is based on the Grakel library. For AWE, we apply grid-search to the parameter step within $\{2,3,4\}$. For GIN, we use the best pair of the number of hidden layers and hidden dimension, respectively, within $\{1,2,3,4,5\}$ and $\{32,64,128\}$.} For all kernel methods based on a Laplacian kernel, we use the best parameter $\lambda$ among \{0.0001, 0.001, 0.01, 0.1, 1, 10\}.

 For classification, we train a multi-class LibSVM classifier using a one-vs.-one approach and apply $10$ runs of $10$-fold {nested} cross-validation \cite{stone1974cross} {(using an outer cross-validation for evaluation and an inner cross-validation for hyperparameter adjustment)}, which is $100$ runs of classification in all. For parameter adjustment of SVM, we apply a grid-search with SVM parameter $C$ within $\{ 0.001, 0.01, 0.1, 1, 10, 100, 1000\}$ for every cross-validation. Then we calculate an average accuracy and a standard deviation after classification. {For the classification of a large-scale dataset, to reduce computation and memory costs, we use the EnsembleSVM \cite{JMLR:v15:claesen14a} as a classifier, which is an implementation of LibSVM based on ensemble learning. This classifier will train multiple SVMs on small subsets of the original dataset; then it will use these weak classifiers to construct a strong classifier. EnsembleSVM helps divide the Gramian matrix into several small Gramian matrices, which generates less computation and memory cost. For classification of GNN, we also apply 10 runs of ten-fold cross-validation, after adjusting the best pair of parameters on an extra pair of 10:1 train-test sample sets. For the computation of average accuracy and a standard deviation, we use the same way as the evaluation in \cite{xu2018powerful}, which is presented in the github source code of their work. We first train the GNN for 200 epochs and evaluate each epoch using a test set, from which one can obtain 10 accuracy curves for 10 folds. Furthermore, we choose the one among all epochs as our final epoch, which has the best average accuracy of these 10 folds. Finally, we use the average accuracy and standard deviation of this epoch as our result. } For this study, we then run these experiments on a computer (16 GB RAM, 2.60 GHz Core i7 CPU; Intel Corp. and GeForce RTX 2060; Nvidia Corp.).

\subsection{Classification of small datasets}
We perform classification experiments on {small} datasets belonging to different categories. Tables \ref{Tab:acc1a}, \ref{Tab:acc1b}, \ref{Tab:acc1c}, and \ref{Tab:acc2} show the average classification accuracies on graphs, where the top {three} accuracies of each dataset are in bold. It is noteworthy that the results marked with {``Out of Time'' indicate the elapsed time as longer than 10 days.}

Overall, our proposed methods {including LCS kernels and their variant kernels} are shown to outperform many state-of-the-art methods in some datasets. They also keep good performance on most datasets. As shown in Tables \ref{Tab:acc1a}, \ref{Tab:acc1b} and \ref{Tab:acc1c}, in normal datasets such as {COX2, MSRC-9, IMDB-M, ENZYMES, BZR-MD and COX2-MD}, our proposed methods perform the best. We also find that in some datasets of the computer vision category (MSRC-9, MSRC-21, MSRC-21C), our proposed methods outperform many other well-known Weisfeiler--Lehman-based methods such as WL, WL-OA and WWL. {For LCS kernels and their variant kernels, it is apparent that some variant kernels perform the best on some of the datasets, such as FLCS-R on MSRC-9, MSRC-21, MSRC-21C and BZR-MD, and LevenshteinK on PROTEINS, IMDB-M, COX2-MD and ER-MD, but FLCS maintains good average accuracies that are not much different from those obtained using LevenshteinK. Also, for other datasets such as MUTAG, PTC-FM, PTC-MR, results show that FLCS outperforms LevenshteinK. Therefore, FLCS has the most stable performance on each dataset, which demonstrates its robustness compared to other variant kernels. Comparison of the results obtained from FLCS and FLCS-R reveals that the OT framework improves robustness and performance from the $\mathcal{R}$-convolution framework. Moreover, because we use the with-label implementation of SPK, it outperforms FLCS-Len on most attributed datasets. However, for unattributed datasets such as IMDB-B and IMDB-M, results show that FLCS-Len, which can be regarded as an optimal transport implementation of SPK, has better performance than that of SPK.}

We also perform dense graph experiments that test our proposed methods on a dataset with dense graphs, as shown in Table \ref{Tab:acc2}, where our proposed methods also retain good performance. Furthermore, elapsed time experiments are presented in Table \ref{Tab:tim}, where we use $\rho = 0.2, s = 0.5$ for the FLCS kernel. 
From Table \ref{Tab:tim}, it is apparent that the FLCS kernel has greatly reduced the computational cost from the BLCS kernel, also that it is competitive in computational speed among OT-based methods.

\begin{table}
\centering
\caption{Average classification accuracy on graphs with discrete vertex and edge labels}
\label{Tab:acc1a}
\begin{tabular}{lcccc}
\toprule
\!\!M{\scriptsize ETHOD}\!\!&MUTAG&PTC-FM&PTC-MR&AIDS\cr
\midrule
WLK&79.31$\pm$7.90&62.78$\pm$7.67&62.23$\pm$8.09&97.90$\pm$0.95\cr
SPK&83.24$\pm$8.80&63.35$\pm$5.92&59.33$\pm$8.78&\CBLG{99.58$\pm$0.43}\cr
WL-OA&82.72$\pm$7.09&\CBLG{64.92$\pm$6.30}&\CBLG{63.45$\pm$8.63}&99.40$\pm$0.61\cr
\midrule
WWL&\CBLG{85.90$\pm$7.39}&\CBLG{66.59$\pm$7.18}&\CBLG{65.31$\pm$7.06}&98.23$\pm$0.90\cr
FGW&79.20$\pm$8.44&64.81$\pm$5.19&56.57$\pm$6.83&98.37$\pm$0.82\cr
\midrule
AWE&74.68$\pm$9.07&\CBLG{65.03$\pm$5.23}&56.36$\pm$5.01&80.00$\pm$0.00\cr
\midrule
GIN&\CBLG{88.59$\pm$6.89}&63.69$\pm$7.41&\CBLG{64.76$\pm$7.67}&97.87$\pm$1.08\cr
\midrule
BLCS&85.54$\pm$8.21&61.63$\pm$5.79&59.70$\pm$7.03&Out of Time\cr
FLCS&\CBLG{85.80$\pm$6.54}&64.32$\pm$6.24&62.49$\pm$7.36&\CBLG{99.38$\pm$0.47}\cr
FLCS-R&78.89$\pm$8.26&63.68$\pm$7.10&58.02$\pm$6.89&99.02$\pm$0.63\cr
LevenshteinK&83.81$\pm$8.19&62.63$\pm$5.61&59.27$\pm$7.47&\CBLG{99.36$\pm$0.55}\cr
FLCS-Len&77.01$\pm$8.50&60.60$\pm$6.16&55.04$\pm$7.75&97.66$\pm$1.14\cr
\bottomrule
\end{tabular}
\vspace*{0.3cm}

\caption{Average classification accuracy on graphs with only a discrete vertex label}
\label{Tab:acc1b}
\begin{tabular}{lccccc}
\toprule
\!\!M{\scriptsize ETHOD}\!\!&BZR&COX2&PROTEINS&MSRC-9&MSRC-21\cr
\midrule
WLK&\CBLG{88.58$\pm$4.02}&\CBLG{81.76$\pm$4.28}&74.24$\pm$3.75&90.10$\pm$4.79&88.45$\pm$3.46\cr
SPK&86.53$\pm$4.55&81.90$\pm$5.12&\CBLG{76.02$\pm$3.80}&91.64$\pm$5.53&90.47$\pm$3.55\cr
WL-OA&\CBLG{87.98$\pm$3.67}&80.47$\pm$4.44&74.24$\pm$3.75&90.60$\pm$4.94&90.99$\pm$3.31\cr
\midrule
WWL&\CBLG{88.37$\pm$3.84}&81.75$\pm$3.71&74.13$\pm$3.47&90.69$\pm$4.57&89.75$\pm$3.52\cr
FGW&78.77$\pm$1.01&78.15$\pm$0.80&59.56$\pm$0.16&13.57$\pm$0.17&6.04$\pm$0.87\cr
\midrule
AWE&78.77$\pm$1.01&78.15$\pm$0.80&66.59$\pm$2.85&46.44$\pm$8.14&20.98$\pm$4.47\cr
\midrule
GIN&87.76$\pm$4.73&\CBLG{82.59$\pm$4.42}&73.72$\pm$4.27&\CBLG{93.26$\pm$4.00}&\CBLG{93.51$\pm$2.70}\cr
\midrule
BLCS&86.15$\pm$4.18&\CBLG{83.53$\pm$4.85}&Out of Time&91.23$\pm$4.71&92.69$\pm$3.17\cr
FLCS&86.52$\pm$4.11&79.95$\pm$3.87&\CBLG{74.88$\pm$3.72}&91.23$\pm$4.71&\CBLG{92.94$\pm$3.09}\cr
FLCS-R&82.32$\pm$3.91&81.08$\pm$4.23&74.45$\pm$4.05&\CBLG{93.57$\pm$4.20}&\CBLG{93.12$\pm$2.68}\cr
LevenshteinK&87.29$\pm$4.54&81.64$\pm$4.44&\CBLG{75.37$\pm$3.97}&\CBLG{92.63$\pm$4.50}&92.51$\pm$2.98\cr
FLCS-Len&82.65$\pm$3.62&78.39$\pm$2.81&71.71$\pm$3.80&14.22$\pm$6.48&8.31$\pm$2.91\cr
\bottomrule
\end{tabular}

\end{table}

\begin{table}
\centering
\caption{Average classification accuracy on graphs with only discrete vertex label (including degree of node)}
\label{Tab:acc1c}
\begin{tabular}{lcccc}
\toprule
\!\!M{\scriptsize ETHOD}\!\!&IMDB-B&IMDB-M&MSRC-21C&ENZYMES\cr
\midrule
WLK&\CBLG{72.77$\pm$4.31}&50.44$\pm$3.72&84.66$\pm$6.92&53.16$\pm$5.93\cr
SPK&57.92$\pm$5.04&39.3$\pm$3.43&84.66$\pm$6.92&41.30$\pm$4.88\cr
WL-OA&\CBLG{72.87$\pm$3.93}&50.33$\pm$3.82&85.24$\pm$6.59&\CBLG{60.23$\pm$5.49}\cr
\midrule
WWL&72.75$\pm$4.01&50.64$\pm$4.24&83.95$\pm$6.39&57.4$\pm$6.62\cr
FGW&50.00$\pm$0.00&33.33$\pm$0.00&13.88$\pm$1.46&16.66$\pm$0.00\cr
\midrule
AWE&70.88$\pm$4.35&47.21$\pm$3.90&24.93$\pm$7.75&28.65$\pm$5.22\cr
\midrule
GIN&\CBLG{74.88$\pm$4.16}&\CBLG{51.24$\pm$4.13}&\CBLG{90.13$\pm$6.06}&47.00$\pm$5.42\cr
\midrule
BLCS&72.13$\pm$4.03&\CBLG{50.68$\pm$4.18}&86.82$\pm$6.16&\CBLG{62.60$\pm$5.52}\cr
FLCS&72.43$\pm$4.35&50.51$\pm$4.11&86.11$\pm$6.45&56.31$\pm$5.74\cr
FLCS-R&71.54$\pm$3.93&48.01$\pm$3.96&\CBLG{88.16$\pm$6.93}&31.71$\pm$5.44\cr
LevenshteinK&72.33$\pm$3.84&\CBLG{51.31$\pm$4.19}&\CBLG{87.15$\pm$6.71}&\CBLG{60.36$\pm$6.46}\cr
FLCS-Len&58.39$\pm$4.66&39.94$\pm$3.29&12.61$\pm$5.99&24.58$\pm$4.44\cr
\bottomrule
\end{tabular}
\vspace*{0.3cm}

\centering
\caption{Average classification accuracy on dense graph datasets}
\label{Tab:acc2}
\begin{tabular}{lccc}
\toprule
M{\scriptsize ETHOD}&BZR-MD&COX2-MD&ER-MD\cr
\midrule
WLK&68.78$\pm$7.47&59.77$\pm$8.40&67.22$\pm$5.75\cr
SPK&70.14$\pm$6.62&\CBLG{66.18$\pm$9.48}&62.61$\pm$6.37\cr
WL-OA&67.90$\pm$7.34&61.71$\pm$8.45&70.45$\pm$5.69\cr
\midrule
WWL&66.25$\pm$7.32&60.59$\pm$7.45&68.41$\pm$6.41\cr
FGW&\CBLG{72.06$\pm$6.52}&51.15$\pm$1.29&66.60$\pm$6.22\cr
\midrule
AWE&66.89$\pm$7.28&62.07$\pm$7.71&70.44$\pm$6.32\cr
\midrule
GIN&\CBLG{70.43$\pm$6.54}&\CBLG{66.44$\pm$8.03}&\CBLG{74.75$\pm$5.25}\cr
\midrule
BLCS&69.87$\pm$6.83&65.27$\pm$7.40&\CBLG{71.42$\pm$6.70}\cr
FLCS&69.87$\pm$6.83&65.27$\pm$7.40&69.58$\pm$6.56\cr
FLCS-R&\CBLG{72.16$\pm$7.30}&65.09$\pm$8.40&70.75$\pm$6.82\cr
LevenshteinK&69.82$\pm$7.26&\CBLG{67.05$\pm$7.70}&\CBLG{73.78$\pm$6.08}\cr
FLCS-Len&51.30$\pm$0.98&51.15$\pm$1.29&59.41$\pm$0.69\cr
\bottomrule
\end{tabular}
\end{table}

\subsection{Classification for a large-scale dataset}
To evaluate the robustness of our proposed methods and to demonstrate their performance with a large-scale dataset compared to the performance achieved by {other Graph Kernel methods and} Graph Neural Networks, we perform classification experiments on the ogbg-molhiv and {UACC257} datasets. In this experiment, we test the FLCS kernel, WWL kernel, WL kernel (WLK), {WL-OA kernel (WL-OA)} and shortest path kernel (SPK). We also add two baselines of GNN including the GIN and the Directional Graph Networks (DGN) \cite{beaini2020directional}. {For the ogbg-molhiv dataset, where these two GNN methods have already been evaluated in existing works \cite{beaini2020directional, hu2020open}, we use the public evaluation results on the website of Open Graph Benchmark. For UACC257 datasets, we conduct evaluations based on source code provided by the Open Graph Benchmark website with the default setting. }

For the parameter setting, we set fixed parameters for each kernel method empirically based on the best parameters adjusted on small datasets. Specifically, the removal ratio $\rho$ is set to 0.4. The merging radius $s$ is set to 0.5 for FLCS. Also, parameter $H$ is set to 4 for WWL, WL, and WL-OA. For parameters of EnsembleSVM, we train 10 weak classifiers on 10 subsets of training datasets. Each subset includes 1000 positive samples and 1000 negative samples, which are selected by bootstrapping. The parameter $C$ of EnsembleSVM is adjusted within $\{1,10,100,1000\}$. The ogbg-molhiv provides a fixed division of train, validation and test subsets. {For the UACC257 dataset, we manually divide it using a scaffold splitter. The training, validation, and test subsets respectively account for 60\%, 20\% and 20\% of the whole dataset.} We use the training subset for model training and use the validation subset for parameter adjustment, and the test subset for evaluation. We perform {10} runs of evaluation for each method and then compute the average test and validation ROC-AUC, and their standard deviation.

As shown in Table \ref{Tab:accogb}, where the top {three} accuracies are shown in bold typeface, {FLCS outperforms all other compared kernel methods on both datasets. It even outperforms GIN and finally reaches the second place on ogbg-molhiv dataset. For the ogbg-molhiv dataset, FLCS also shows equivalent performance to DGN, which is the best GNN method in the leaderboard (2020) for the same dataset on the website of Open Graph Benchmark.} The results demonstrate that the FLCS cooperates well with ensemble learning methods such as EnsembleSVM on large-scale datasets. 

\subsection{Effects of different sampling order for FLCS}
{The adjacent point merging operation defined in {\bf Definition \ref{Def:APM}} is actually a random merging of the distribution, which means that if one samples points in a different order, the local centers of merged distribution can be expected to change slightly. However, given the premise that only sampling points with a distance less than a properly small merging radius will be merged, local centers might change in different sampling orders, but the new local center will be a similar point to the original one. To assess effects produced by different sampling orders, one can shuffle the list of sampling points randomly to ensure that sampling points are merged in different orders, Table \ref{Tab:rand} presents accuracies of FLCS with five shuffled sampling point lists. The classification accuracy appears to change slightly. For the standard deviation of different random states, it becomes 1.32 for the MUTAG dataset and 0.45 for the IMDB-B dataset, which means that the effects of different sampling orders are sufficiently small as to be negligible.}

\begin{table}
\centering
\caption{Average ROC-AUC on large datasets}
\label{Tab:accogb}
{
\begin{tabular}{lcc}
\toprule
{M{\scriptsize ETHOD}}&ogbg-molhiv&UACC257\cr
\midrule
WWL&0.5003$\pm$0.0772&0.5469$\pm$0.0570\cr
WLK&0.7043$\pm$0.0092&0.6987$\pm$0.0638\cr
WL-OA&Out of Memory&0.7042$\pm$0.0271\cr
SPK&0.6577$\pm$0.0399&0.6193$\pm$0.0072\cr
GIN&0.7558$\pm$0.0140&\CBLG{0.8764$\pm$0.0045}\cr
DGN&\CBLG{0.7970$\pm$0.0097}&\CBLG{0.8341$\pm$0.0187}\cr
\midrule
FLCS&\CBLG{0.7808$\pm$0.0028}&\CBLG{0.7840$\pm$0.0027}\cr
LevenshteinK&\CBLG{0.7786$\pm$0.0122}&0.7712$\pm$0.0040\cr
FLCS-R&0.4386$\pm$0.0411&0.6400$\pm$0.0501\cr
\bottomrule
\end{tabular}
}

\caption{Elapsed time (Sec) for Gramian matrix computing of OT-based methods}
\label{Tab:tim}
\begin{tabular}{lcccc}
\toprule
M{\scriptsize ETHOD}&MUTAG&COX2-MD&IMDB-B&ENZYMES\cr
\midrule
WWL&25.44&68.33&373.38&249.75\cr
FGW&52.07&122.51&956.12&359.43\cr
\midrule
BLCS&600.19&101.68&1061.12&20191.16\cr
FLCS&38.40&100.79&510.43&428.00\cr
\bottomrule
\end{tabular}

\caption{Average classification accuracy of FLCS in different sampling orders}
\label{Tab:rand}
\begin{tabular}{ccc}
\toprule
Random state&MUTAG&IMDB-B\cr
\midrule
20&85.73&72.76\cr
21&84.17&71.80\cr
22&86.33&71.40\cr
23&87.78&71.72\cr
24&84.36&71.85\cr
\midrule
Average&85.67$\pm$1.32&71.90$\pm$0.45\cr
\bottomrule
\end{tabular}

\end{table}

\subsection{Effects of negative eigenvalues in a Gramian matrix for LCS kernels}
{As described in an earlier section, the LCS Graph Kernel performance might be affected by negative eigenvalues of the Gramian matrix. Therefore, we set up an experiment where we compare the average accuracies of LCS Graph Kernels with LibSVM and Kre{\u\i}n SVM (KSVM). For classification with KSVM, we first perform eigenvalue decomposition on the Gramian matrix computed by the kernel function, where the Gramian matrix $\mat{G}\in \mathbb{R}^{M \times M}$ is decomposed into a diagonal of eigenvalues $\mat{W}\in \mathbb{R}^{M \times M}$ and a matrix of $M$ normalized eigenvectors $\mat{V}\in \mathbb{R}^{M \times M}$, where each column of $\mat{V}$ is a single eigenvector. Subsequently, we replace all negative eigenvalues in $\mat{W}$ with $0$ to form a new $\mat{W}^\prime$. Then we revert the Gramian matrix by $\mat{G}^\prime = \mat{V}\mat{W}^\prime\mat{V}^T$, which will be inputted to LibSVM. Table \ref{Tab:nege} shows the average accuracies of BLCS and FLCS with LibSVM and KSVM on different datasets, from which it is apparent that the negative eigenvalues in the Gramian matrix of LCS kernels have no strong effect on their classification performance.}

\begin{table}
\centering
\caption{Difference of average classification accuracy with LibSVM and Kre{\u\i}n SVM }
\label{Tab:nege}
\begin{tabular}{lcccc}
\toprule
\multirow{2}*{Dataset}&\multicolumn{2}{c}{BLCS}&\multicolumn{2}{c}{FLCS}\cr
\cline{2-5}
 &LibSVM&KSVM&LibSVM&KSVM\cr
\midrule
MUTAG&85.54$\pm$8.21&85.69$\pm$6.97&85.80$\pm$6.54&85.18$\pm$7.23\cr
PTC-FM&61.63$\pm$5.79&61.63$\pm$5.94&64.32$\pm$6.24&64.41$\pm$5.65\cr
PTC-MR&59.70$\pm$7.03&60.52$\pm$7.15&62.49$\pm$7.36&63.22$\pm$7.15\cr
BZR&86.15$\pm$4.18&85.69$\pm$4.47&86.52$\pm$4.11&86.55$\pm$4.00\cr
COX2&83.53$\pm$4.85&83.08$\pm$5.02&79.95$\pm$3.87&80.31$\pm$4.00\cr
PROTEINS&\multicolumn{2}{c}{Out of Time}&74.88$\pm$3.72&75.87$\pm$4.03\cr
IMDB-B&72.13$\pm$4.03&73.89$\pm$4.16&72.43$\pm$4.35&73.39$\pm$3.65\cr
IMDB-M&50.68$\pm$4.18&55.01$\pm$4.04&50.51$\pm$4.11&54.95$\pm$4.09\cr
ENZYMES&62.60$\pm$5.52&62.91$\pm$5.81&56.31$\pm$5.74&56.30$\pm$5.76\cr
MSRC-9&91.23$\pm$4.71&91.23$\pm$4.31&91.23$\pm$4.71&91.41$\pm$4.78\cr
MSRC-21&92.69$\pm$3.17&92.99$\pm$3.01&92.94$\pm$3.09&92.78$\pm$3.07\cr
MSRC-21C&86.82$\pm$6.16&86.92$\pm$6.25&86.11$\pm$6.45&86.01$\pm$6.52\cr
AIDS&\multicolumn{2}{c}{Out of Time}&99.38$\pm$0.47&99.35$\pm$0.53\cr
BZR-MD&69.87$\pm$6.83&69.58$\pm$6.94&69.87$\pm$6.83&71.53$\pm$7.58\cr
COX2-MD&65.27$\pm$7.40&68.43$\pm$8.88&65.27$\pm$7.40&68.43$\pm$8.88\cr
ER-MD&71.42$\pm$6.70&74.41$\pm$6.58&69.58$\pm$6.56&73.87$\pm$6.24\cr
\bottomrule
\end{tabular}
\end{table}

\subsection{Classification under different parameters}
\begin{figure}
\centering
\subfigure[Acc./$s$ graph on MUTAG]{
\includegraphics[width=0.4\textwidth]{./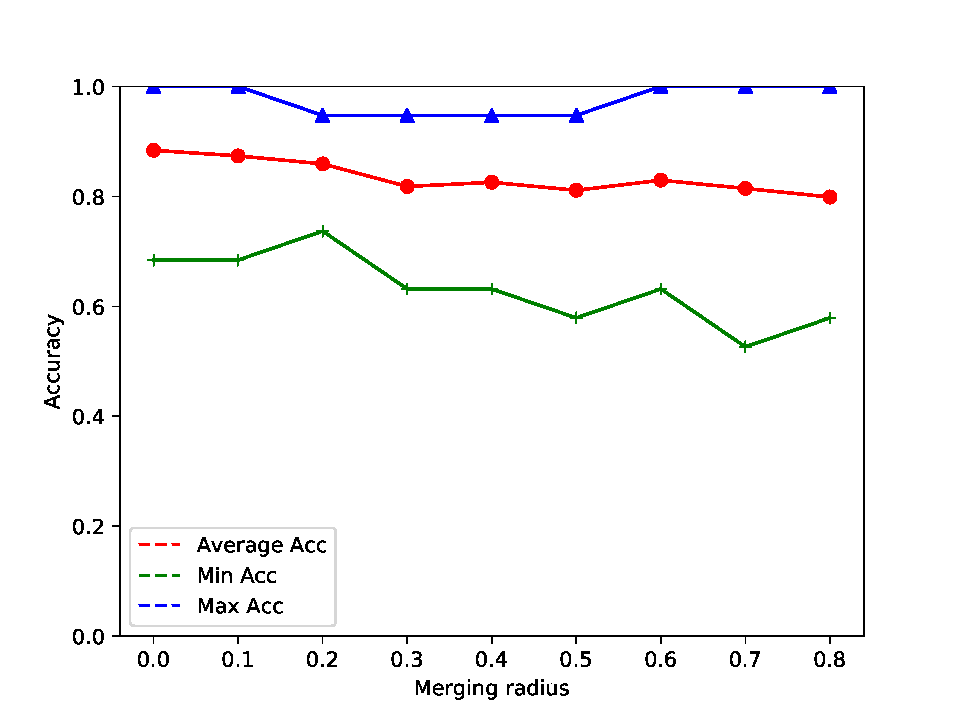} 
}
\subfigure[Acc./$s$ graph on PTC-MR]{
\includegraphics[width=0.4\textwidth]{./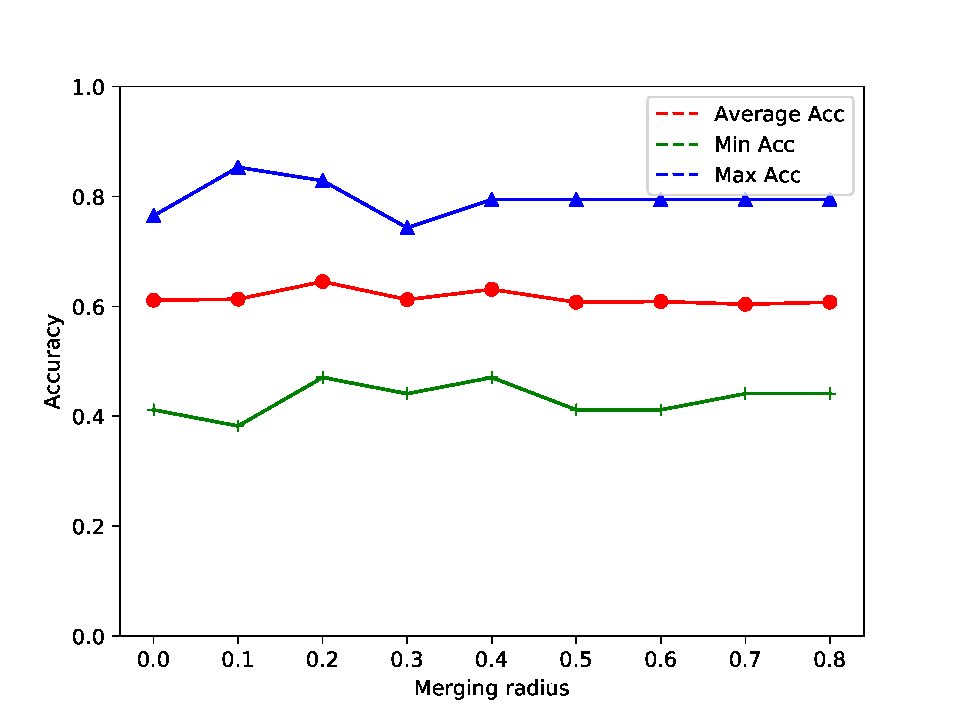} 
}\\
\subfigure[Acc./$\rho$ graph on MUTAG]{
\includegraphics[width=0.4\textwidth]{./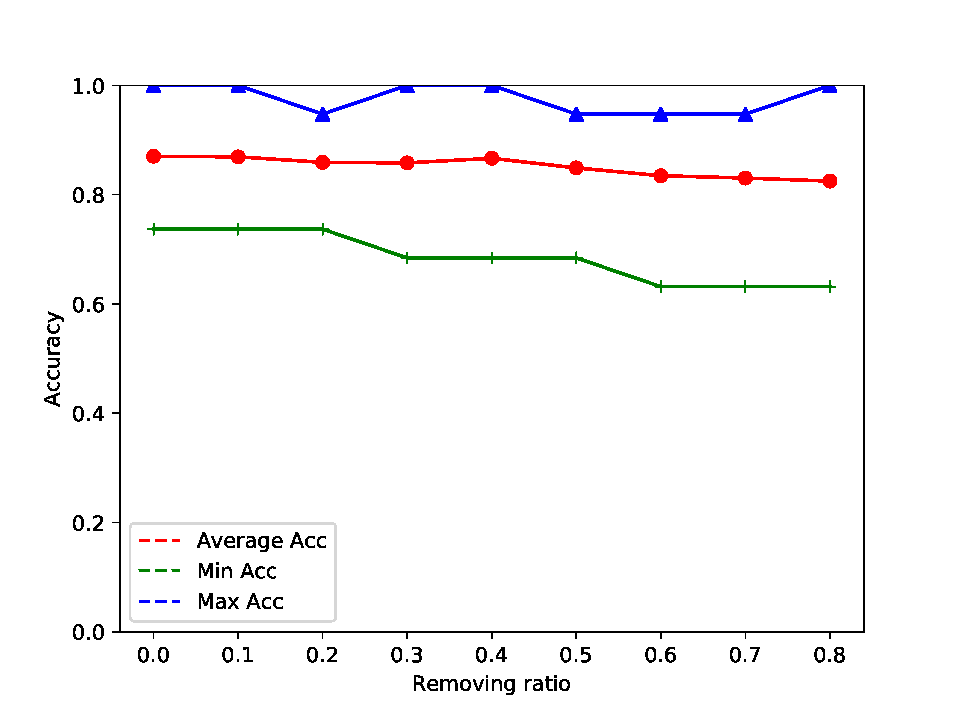} 
}
\subfigure[Acc./$\rho$ graph on PTC-MR]{
\includegraphics[width=0.4\textwidth]{./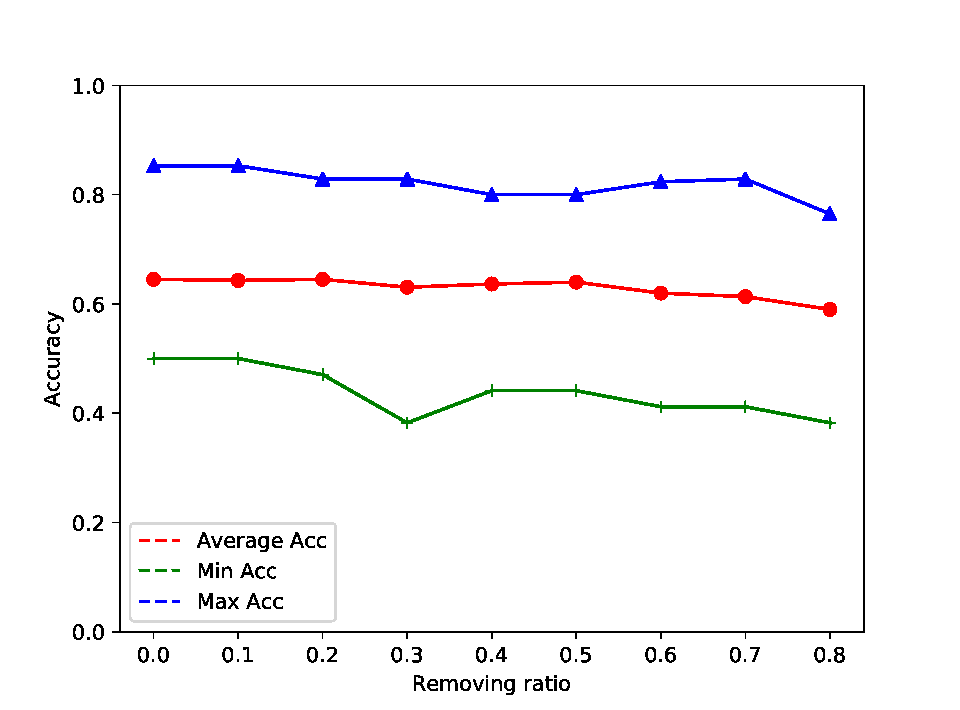} 
}\\
\subfigure[Time/$s$ graph on PTC-MR]{
\includegraphics[width=0.4\textwidth]{./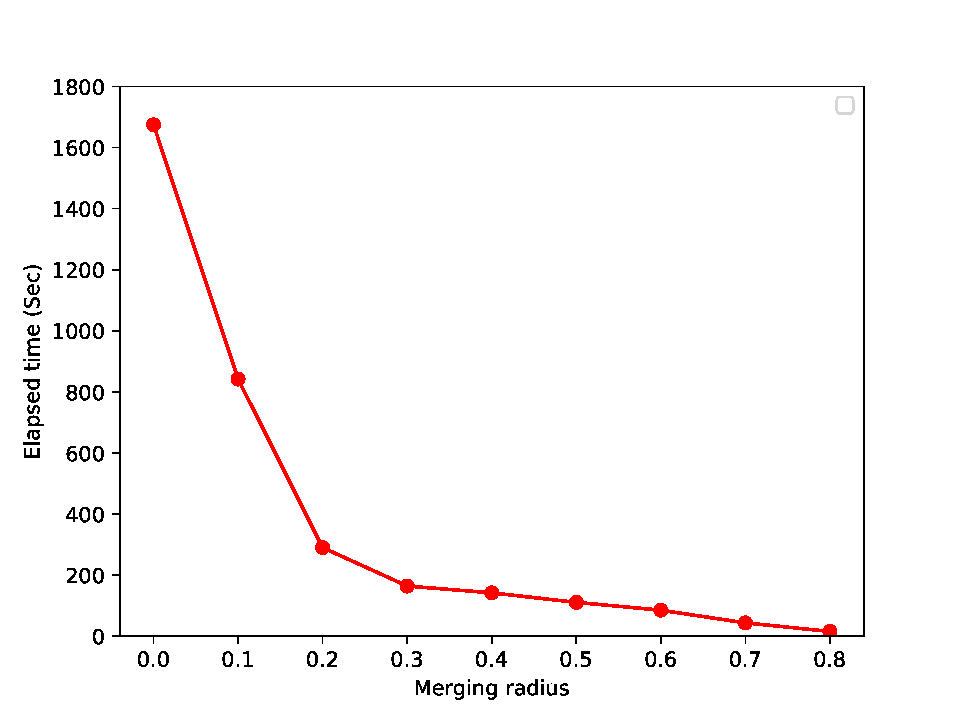} 
}
\subfigure[Time/$\rho$ graph on PTC-MR]{
\includegraphics[width=0.4\textwidth]{./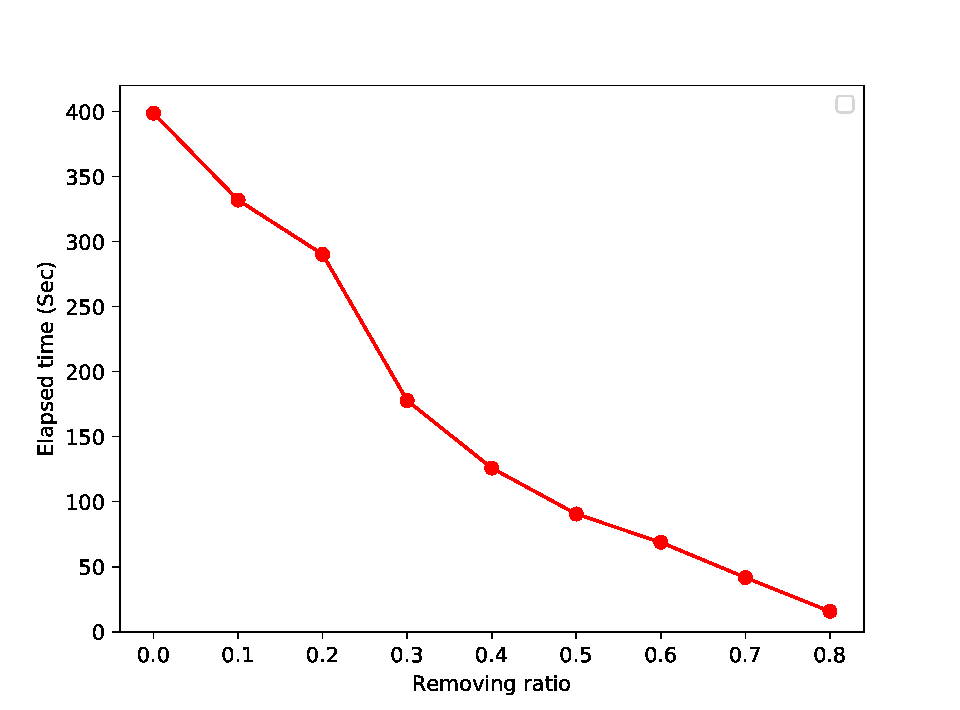} 
}
\caption{Classification accuracy and elapsed time of the LCS kernel on MUTAG and PTC-MR. For an evaluation indicator, (a), (b), (c), and (d) show tests of the classification accuracy; (e) and (f) show the elapsed time of Gramian matrix computing. For the parameters, (a), (b) and (e) are tests of different merging radius $s$ with removal ratio $\rho$ fixed at 0.2; (c), (d) and (f) are tests of different removal ratios $\rho$ with the merging radius $s$ fixed at 0.2.} 
\label{Fig:extra1}
\end{figure}
To evaluate the classification performance of FLCS, the fast variant of the proposed method under different pairwise parameters, we set up several parameter adjustment experiments. In the experiments, we change one parameter and leave the other unchanged to see how a single parameter influences the classification effect and computational cost. The classification accuracy and elapsed times of different parameters are shown in Figure \ref{Fig:extra1}.

First, we address the parameter of merging radius $s$, which controls the effects of adjacent point merging operations. If the distance between two points in LCS metric space is less than $s$, they will be merged as a local center, which means that, as the value of $s$ increases, fewer points will remain after adjacent point merging. This appears at Line 9 in {\bf Algorithm 3} in the Algorithm section of Supplementary Materials, and in {\bf Definition 7} in the paper. For classification results, because of the information loss, the accuracy might decrease as $s$ increases. However, from Figures \ref{Fig:extra1}(a) and \ref{Fig:extra1}(b), it might be readily apparent that the effect of changing $s$ differs for different datasets. Particularly, the accuracy on the MUTAG dataset shows a downward trend as the merging radius getting larger, but it does not change a lot on the PTC-MR dataset, with two local extreme points at 0.2 and 0.4. That finding shows that one need not always keep $s$ near a great value, but it must be set at a local extreme point where computational cost can be maximally reduce and considerable accuracy kept. For the elapsed time, as we had expected, when the merging radius gets larger, it decreases exponentially (from Figure \ref{Fig:extra1}(e)). At a certain point (e.g. near $s = 0.2$ on PTC-MR), adjacent point merging operation can apparently remove most of the repeated path sequences. If one continues to increase $s$, then the reduction of computational cost might not be so effective.

Next, we evaluate the influence of the removal ratio parameter $\rho$, which means that sequences having length less than $\rho$ percent of the length of the longest sequence will be removed before adjacent point merging operation. This point also appears in Line 4 in {{\bf Algorithm A.3} of the Supplementary Materials}. The classification accuracy in Figure \ref{Fig:extra1}(f) shows a downward trend as the value of $\rho$ increases. An overly large value of $\rho$ is not recommended, but an appropriate value such as 0.2 can be set, which can maintain the accuracy and reduce computational cost maximally.

\section{Conclusion}
We have proposed a novel method of computing similarity between graphs using the Wasserstein distance in LCS metric space. Comparing to other state-of-the-art path-based graph kernels, our method {enables paths with different lengths to be compared} and can give a more accurate metric between graphs. We present both basic and fast LCS Graph Kernels for graph classification tasks, among which the fast LCS Graph Kernel successfully reduces the computational cost of {a huge number of path comparisons}. Although LCS kernels use only length information of LCS, numerical experiments reveal that it shows better performance than those of others. {Moreover, the evaluation result for large-scale datasets also reveals that our proposed methods are competitive with some state-of-the-art GNN methods, even for a large dataset}. As future work, we intend to examine more complicated information of LCS specifically, such as subsequence patterns of paths, which might help us to capture the key classification standard in specific classification problems. Moreover, common subsequence patterns of paths or walks are more appropriate as neural network inputs. They can help build a model that includes common features with respect to specific classification problems and which can embed a graph directly rather than comparing each graph in training data.

\clearpage
\appendix
\setcounter{section}{0}
\renewcommand{\thesection}{A}
\renewcommand{\thealgorithm}{A.\arabic{algorithm}} 

\section{Supplementary Materials}

\subsection{Proof of Theorem 1}
In this subsection, we will prove some important propositions about LCS metric space. 

We give a self-contained proof of {\bf Theorem 1} by exactly following the results of previous work \cite{bakkelund2009lcs}, which also proposes an LCS-based string metric similar to ours. We first restate the definition of metric before proceeding.
\begin{definition}[Metric]
A metric on the set $\mathcal{X}$ is a function $d: \mathcal{X}\times \mathcal{X} \to \mathbb{R}$ that satisfies the following properties for all $x, y, z \in \mathcal{X}$:

(1) Positive definiteness: $d(x, y) \geq 0$, and $d(x,y) = 0$ if and only if $x = y$;

(2) Symmetry: $d(x, y) = d(y, x)$;

(3) Triangle inequality: $d(x,y) \leq d(x,z) + d(z,y)$
\end{definition}
To prove the LCS metric of LCS metric space $d(\vec{x}_1, \vec{x}_2) = 1 - \frac{F_{\rm lcs}(\vec{x}_1, \vec{x}_2)}{\max (|\vec{x}_1|, |\vec{x}_2|)}$ (Equation (2) and (3) in the paper) is a metric, we only have to prove the following lemmas:

\begin{lemma}
\label{Lem:pds}
The function of LCS metric is positive definite and symmetric.
\end{lemma}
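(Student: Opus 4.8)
The plan is to verify the two required properties directly from the definition $d(\vec{x}_1,\vec{x}_2) = 1 - F_{\rm lcs}(\vec{x}_1,\vec{x}_2)/\max({\rm len}(\vec{x}_1),{\rm len}(\vec{x}_2))$, handling symmetry and positive definiteness separately, since each reduces to an elementary property of the longest common subsequence.

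For symmetry, I would observe that the longest common subsequence is insensitive to the order of its two arguments: any common subsequence of $\vec{x}_1$ and $\vec{x}_2$ is a common subsequence of $\vec{x}_2$ and $\vec{x}_1$, and conversely, so $F_{\rm lcs}(\vec{x}_1,\vec{x}_2) = F_{\rm lcs}(\vec{x}_2,\vec{x}_1)$. Since $\max(\cdot,\cdot)$ is symmetric in its arguments, the denominator is unchanged as well, whence $F_{\rm sim}$ and therefore $d$ are symmetric. This step is immediate and poses no difficulty.

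For positive definiteness, the key inequality is $0 \le F_{\rm lcs}(\vec{x}_1,\vec{x}_2) \le \min({\rm len}(\vec{x}_1),{\rm len}(\vec{x}_2))$: the length of any subsequence is nonnegative, and a common subsequence cannot be longer than the shorter of the two sequences into which it embeds. Dividing through by $\max({\rm len}(\vec{x}_1),{\rm len}(\vec{x}_2))$ and using $\min \le \max$ yields $0 \le F_{\rm sim}(\vec{x}_1,\vec{x}_2) \le 1$, hence $d(\vec{x}_1,\vec{x}_2) \ge 0$ (and in fact $d \le 1$). The only remaining work is the equality characterization $d(\vec{x}_1,\vec{x}_2)=0 \Leftrightarrow \vec{x}_1 = \vec{x}_2$.

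The ``if'' direction is trivial, since $F_{\rm lcs}(\vec{x},\vec{x}) = {\rm len}(\vec{x}) = \max({\rm len}(\vec{x}),{\rm len}(\vec{x}))$ gives $d(\vec{x},\vec{x}) = 0$. The main obstacle, and the only step requiring genuine care, is the ``only if'' direction: $d(\vec{x}_1,\vec{x}_2)=0$ forces $F_{\rm lcs}(\vec{x}_1,\vec{x}_2) = \max({\rm len}(\vec{x}_1),{\rm len}(\vec{x}_2))$, and combined with the upper bound $F_{\rm lcs} \le \min \le \max$ the chain of inequalities must be tight. I would argue this forces both ${\rm len}(\vec{x}_1) = {\rm len}(\vec{x}_2)$, say both equal to $n$, and $F_{\rm lcs}(\vec{x}_1,\vec{x}_2)=n$. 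A common subsequence of length $n$ is simultaneously a length-$n$ subsequence of $\vec{x}_1$ and of $\vec{x}_2$; but a subsequence of $\vec{x}_1$ having the same length as $\vec{x}_1$ can only be $\vec{x}_1$ itself, since no element was deleted, and likewise for $\vec{x}_2$. Hence $\vec{x}_1 = \vec{x}_2$, which establishes positive definiteness and completes the lemma.
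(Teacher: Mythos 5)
Your proposal is correct and follows essentially the same route as the paper: both verify symmetry from the order-insensitivity of the LCS and positive definiteness from the fact that a common subsequence cannot exceed its parent sequences in length. In fact your treatment is slightly more careful than the paper's, which simply asserts that $F_{\rm lcs}(\vec{x}_1,\vec{x}_2)=\max({\rm len}(\vec{x}_1),{\rm len}(\vec{x}_2))$ holds if and only if $\vec{x}_1=\vec{x}_2$, whereas you justify the ``only if'' direction explicitly via the tight chain $F_{\rm lcs}\leq\min\leq\max$ and the observation that a full-length subsequence must equal the sequence itself.
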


\begin{proof}Because {the} longest common subsequence is a subsequence of its parent sequences, it means that $\frac{F_{\rm lcs}(\vec{x}_1, \vec{x}_2)}{\max (|\vec{x}_1|, |\vec{x}_2|)} \leq 1$ and $F_{\rm lcs}(\vec{x}_1, \vec{x}_2) = \max (|\vec{x}_1|, |\vec{x}_2|)$ if and only if $\vec{x}_1 = \vec{x}_2$. Therefore, $d(\vec{x}_1, \vec{x}_2) \geq 1 - 1= 0$, and $d(\vec{x}_1,\vec{x}_2) = 0$ if and only if $\vec{x}_1 = \vec{x}_2$. Thus the function of LCS metric is positive definite.

As for the symmetry, because {the} longest common subsequence method does not care about the order of compared parent sequences, we have $d(\vec{x}_1, \vec{x}_2) = d(\vec{x}_2, \vec{x}_1)$. Subsequently, the function of LCS metric is symmetric. This completes the proof.
\end{proof}

\begin{lemma}
\label{Lem:ti}
The function of LCS metric satisfies the triangle inequality.
\end{lemma}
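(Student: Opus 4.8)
The plan is to reduce the normalized triangle inequality to a single combinatorial fact about LCS lengths and then close it by a short case analysis on the sequence lengths. Throughout, write $\ell(\vec{x},\vec{y}) := F_{\rm lcs}(\vec{x},\vec{y})$ and $|\vec{x}| := {\rm len}(\vec{x})$ for brevity, so that $d(\vec{x},\vec{y}) = 1 - \ell(\vec{x},\vec{y})/\max(|\vec{x}|,|\vec{y}|)$. Fix three sequences $\vec{x},\vec{y},\vec{z}$; by the definition of $d$, the claim $d(\vec{x},\vec{y}) \le d(\vec{x},\vec{z}) + d(\vec{z},\vec{y})$ is equivalent, after clearing the constant $1$'s, to
\[
\frac{\ell(\vec{x},\vec{z})}{\max(|\vec{x}|,|\vec{z}|)} + \frac{\ell(\vec{z},\vec{y})}{\max(|\vec{z}|,|\vec{y}|)} \le 1 + \frac{\ell(\vec{x},\vec{y})}{\max(|\vec{x}|,|\vec{y}|)},
\]
so it suffices to bound the left-hand side from above while bounding $\ell(\vec{x},\vec{y})$ from below.

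The first and central step is a combinatorial lemma expressing super-additivity of the LCS through an intermediate sequence:
\[
\ell(\vec{x},\vec{y}) \ge \ell(\vec{x},\vec{z}) + \ell(\vec{z},\vec{y}) - |\vec{z}|.
\]
I would prove this by an embedding/position-counting argument. An LCS of $\vec{x}$ and $\vec{z}$ selects $\ell(\vec{x},\vec{z})$ positions of $\vec{z}$, and an LCS of $\vec{z}$ and $\vec{y}$ selects $\ell(\vec{z},\vec{y})$ positions of $\vec{z}$; both index sets live in $\{1,\dots,|\vec{z}|\}$, so they overlap in at least $\ell(\vec{x},\vec{z}) + \ell(\vec{z},\vec{y}) - |\vec{z}|$ positions. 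Reading these shared positions of $\vec{z}$ in increasing order yields symbols that are simultaneously matched into $\vec{x}$ and into $\vec{y}$; because subsequence embeddings preserve order, the induced positions in $\vec{x}$ and in $\vec{y}$ are themselves increasing, so the shared symbols form a genuine common subsequence of $\vec{x}$ and $\vec{y}$, giving the stated bound. I expect this order-preservation step to be the main obstacle, since it is exactly what upgrades the counting estimate from a mere multiset of coincidences into an admissible common subsequence; the rest of the argument hinges on it.

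With the lemma in hand the remainder is routine. Abbreviate $a=|\vec{x}|$, $b=|\vec{z}|$, $c=|\vec{y}|$, $u=\ell(\vec{x},\vec{z})$, $v=\ell(\vec{z},\vec{y})$, and recall $u \le \min(a,b)$ and $v \le \min(b,c)$. If $u+v \le b$, then since $\max(a,b)\ge b$ and $\max(b,c)\ge b$ the left-hand side is at most $(u+v)/b \le 1$, which already suffices. Otherwise $u+v>b$, and substituting the lemma's bound $\ell(\vec{x},\vec{y}) \ge u+v-b$ reduces the claim to the purely numerical inequality
\[
\frac{u}{\max(a,b)} + \frac{v}{\max(b,c)} \le 1 + \frac{u+v-b}{\max(a,c)}
\]
in the five quantities $a,b,c,u,v$. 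I would finish by case analysis on the ordering of the lengths $a,b,c$: when $b$ is largest the claim collapses to $(s-b)/b \le (s-b)/\max(a,c)$ with $s=u+v$; when $b$ is smallest or in the middle, a one-line rearrangement reduces each case to one of the defining order relations among $a,b,c$ (for instance $b\le c$), and the symmetry $\vec{x}\leftrightarrow\vec{y}$ halves the subcases. Collecting the cases establishes the triangle inequality and, together with Lemma~\ref{Lem:pds}, completes the proof of Theorem~\ref{Theo:metric}.
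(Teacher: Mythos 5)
Your proof is correct, and it turns on exactly the same key fact as the paper's: your superadditivity lemma $F_{\rm LCS}(\vec{x},\vec{y}) \ge F_{\rm LCS}(\vec{x},\vec{z}) + F_{\rm LCS}(\vec{z},\vec{y}) - {\rm len}(\vec{z})$ is precisely the inequality the paper imports from Bakkelund (its Proposition~1, i.e.\ Bakkelund's Lemma~3.3/Corollary~3.4, stated with the intermediate sequence renamed). The two arguments differ on either side of that lemma. Upstream, you prove the lemma from scratch: the two LCS matchings select index sets inside $\{1,\dots,{\rm len}(\vec{z})\}$ that overlap in at least $F_{\rm LCS}(\vec{x},\vec{z})+F_{\rm LCS}(\vec{z},\vec{y})-{\rm len}(\vec{z})$ positions, and monotonicity of both matchings makes the shared symbols a genuine common subsequence of $\vec{x}$ and $\vec{y}$; this is the standard proof and it is sound, so your version is self-contained where the paper relies on a citation. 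Downstream, the paper closes by assuming ${\rm len}(\vec{x})\le{\rm len}(\vec{z})$ without loss of generality and running an algebraic chain (multiplying the two sides of the cited inequality by the unequal factors $M(\vec{x},\vec{y})\le M(\vec{y},\vec{z})$, inflating one term by the ratio $M(\vec{y},\vec{z})/M(\vec{x},\vec{z})\ge 1$, then dividing through), whereas you split off the easy case $u+v\le b$ and case-analyze the ordering of the lengths $a,b,c$; your cases all close (when $b$ is largest the claim collapses to $(s-b)/b\le(s-b)/\max(a,c)$ with $s-b>0$; when $b\le a\le c$ or $a\le b\le c$, subtracting $v/c$ reduces it to $u/\max(a,b)+(b-u)/c\le 1$, which follows from $u\le b$ and $c\ge\max(a,b)$). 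The two closures are of comparable weight — yours is more elementary and transparent, the paper's is shorter given the citation. One caveat common to both: the divisions require nonempty sequences, which the paper excludes explicitly since serialized shortest paths always have positive length; your write-up should note this too.
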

Before proceeding, we first restate an important proposition of {LCS proven} by Bakkelund \cite{bakkelund2009lcs}.

\begin{proposition}[{\bf Lemma 3.3} and {\bf Corollary 3.4 } in \cite{bakkelund2009lcs}]
\label{pro:INE}
The following inequality holds for all $\vec{x},\vec{y},\vec{z} \in \mathcal{X}$, where $\mathcal{X}$ is a certain sequence set:
\begin{eqnarray*}
F_{\rm lcs}(\vec{x},\vec{y}) + F_{\rm lcs}(\vec{y},\vec{z}) - F_{\rm lcs}(\vec{x},\vec{z}) \leq |\vec{y}|.
\end{eqnarray*}
\end{proposition}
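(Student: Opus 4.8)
The plan is to prove the inequality directly by interpreting each longest common subsequence as an order-preserving matching (an alignment) on index positions, and then to finish with a simple counting argument. First I would encode an LCS as a monotone matching: a common subsequence of two sequences $\vec{a}$ and $\vec{b}$ corresponds to a set of index pairs $\{(i_r,j_r)\}$ with $i_1<i_2<\cdots$ and $j_1<j_2<\cdots$ such that $a_{i_r}=b_{j_r}$, and the size of the largest such matching equals $F_{\rm LCS}(\vec{a},\vec{b})$. Applying this to the pair $(\vec{x},\vec{y})$ produces a matching $M_1$ of size $F_{\rm LCS}(\vec{x},\vec{y})$, and to the pair $(\vec{y},\vec{z})$ a matching $M_2$ of size $F_{\rm LCS}(\vec{y},\vec{z})$.

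Next I would look at the positions of $\vec{y}$ that each matching touches. Let $A\subseteq\{1,\ldots,{\rm len}(\vec{y})\}$ be the set of $\vec{y}$-indices used by $M_1$ and $B$ the set used by $M_2$; since each matched pair consumes a distinct position of $\vec{y}$, we have $|A|=F_{\rm LCS}(\vec{x},\vec{y})$ and $|B|=F_{\rm LCS}(\vec{y},\vec{z})$. The crucial step is to compose the two matchings over the shared set $A\cap B$: for each $p\in A\cap B$, $M_1$ links $p$ to a unique $\vec{x}$-index $\phi(p)$ and $M_2$ links $p$ to a unique $\vec{z}$-index $\psi(p)$, with $x_{\phi(p)}=y_p=z_{\psi(p)}$. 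Because $M_1$ and $M_2$ are order-preserving, $p<p'$ forces both $\phi(p)<\phi(p')$ and $\psi(p)<\psi(p')$, so $\{(\phi(p),\psi(p)):p\in A\cap B\}$ is itself a valid order-preserving matching between $\vec{x}$ and $\vec{z}$ on equal symbols. This gives $F_{\rm LCS}(\vec{x},\vec{z})\geq |A\cap B|$.

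The conclusion then follows from elementary counting. Since $A,B\subseteq\{1,\ldots,{\rm len}(\vec{y})\}$, inclusion--exclusion yields $|A\cap B| = |A|+|B|-|A\cup B| \geq |A|+|B|-{\rm len}(\vec{y})$. Combining this with the previous bound,
\begin{equation*}
F_{\rm LCS}(\vec{x},\vec{z}) \geq |A\cap B| \geq F_{\rm LCS}(\vec{x},\vec{y}) + F_{\rm LCS}(\vec{y},\vec{z}) - {\rm len}(\vec{y}),
\end{equation*}
which rearranges to exactly the claimed inequality.

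I expect the main obstacle to be the rigorous verification of the composition step: one must confirm that $\phi$ and $\psi$ are well defined and injective on $A\cap B$, so that the composed pairs form a genuine matching rather than a multivalued relation, and that monotonicity is inherited through both arrows at once. Everything surrounding this is bookkeeping, but it is precisely the simultaneous order-preservation of the two alignments that carries the argument, so I would write that verification out in full rather than leave it implicit.
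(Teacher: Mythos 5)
Your proof is correct, but note that the paper does not actually prove this proposition: it imports it wholesale as Lemma 3.3 and Corollary 3.4 of \cite{bakkelund2009lcs} and uses it as a black box inside the triangle-inequality argument. What you have done is reconstruct the underlying combinatorial proof, and your reconstruction is sound: encoding each LCS as a strictly monotone matching, composing the two matchings through the shared $\vec{y}$-positions $A \cap B$ to obtain an order-preserving matching between $\vec{x}$ and $\vec{z}$ on equal symbols (hence $F_{\rm LCS}(\vec{x},\vec{z}) \geq |A \cap B|$), and then closing with $|A \cap B| \geq |A| + |B| - {\rm len}(\vec{y})$ since $A \cup B \subseteq \{1,\ldots,{\rm len}(\vec{y})\}$. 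This is in substance the standard proof, and essentially the one in Bakkelund's original paper, so the comparison here is not between two proofs of the lemma but between citing and proving: the paper's choice keeps its exposition short at the cost of self-containedness, while your version exposes the combinatorial core --- the pigeonhole on positions of the middle sequence $\vec{y}$ --- which also makes transparent why ${\rm len}(\vec{y})$, and not the lengths of $\vec{x}$ or $\vec{z}$, is the correct bound. The verification you flag as the main obstacle is in fact immediate and worth stating as such: the $\vec{y}$-indices occurring in $M_1$ are strictly increasing, hence pairwise distinct, so each $p \in A$ belongs to exactly one pair of $M_1$, making $\phi$ well defined and injective, and the simultaneous strict monotonicity of $M_1$ in both coordinates forces $\phi$ to be strictly increasing on $A \cap B$; the identical argument applies to $\psi$, and the composed set $\{(\phi(p),\psi(p)) : p \in A \cap B\}$ is then a genuine monotone matching witnessing a common subsequence of $\vec{x}$ and $\vec{z}$.
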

\begin{proof}

Because the symmetry of LCS metric $d(\vec{x}, \vec{y})$ is {proven} in {\bf Lemma \ref{Lem:pds}}, we can safely assume that $|\vec{x}| \leq |\vec{z}|$ in the triangle inequality as follows:
\begin{eqnarray}
\label{Eq:TINE}
d(\vec{x},\vec{z}) \leq d(\vec{x},\vec{y}) + f(\vec{y},\vec{z}),
\end{eqnarray}
where $\vec{x},\vec{y},\vec{z} \in \mathcal{X}$. Thus we have $M(\vec{x},\vec{y}) \leq M(\vec{y},\vec{z})$ and $M(\vec{x},\vec{z})\leq M(\vec{y},\vec{z})$, where $M(\vec{x},\vec{y}) = \max (|\vec{x}|, |\vec{y}|)$. Due to {\bf Proposition \ref{pro:INE}}, we can derive:
\begin{eqnarray*}
M(\vec{x},\vec{y})(F_{\rm lcs}(\vec{y},\vec{z}) - F_{\rm lcs}(\vec{x},\vec{z})) &\leq \cr
M(\vec{y}, \vec{z})(|\vec{y}|-F_{\rm lcs}(\vec{x}, \vec{y}))&\ 
\end{eqnarray*}
$\Longleftrightarrow$
\begin{eqnarray*}
F_{\rm lcs}(\vec{x},\vec{y})M(\vec{y},\vec{z})+F_{\rm lcs}(\vec{y},\vec{z})M(\vec{x},\vec{y}) &\leq \cr
|\vec{y}|M(\vec{y}, \vec{z}) + F_{\rm lcs}(\vec{x},\vec{z})M(\vec{x},\vec{y}).&\ 
\end{eqnarray*}
Because we have $M(\vec{x},\vec{z})\leq M(\vec{y},\vec{z})$, we can derive that:
\begin{eqnarray*}
F_{\rm lcs}(\vec{x},\vec{y})M(\vec{y},\vec{z})+F_{\rm lcs}(\vec{y},\vec{z})M(\vec{x},\vec{y}) &\leq \cr
M(\vec{x}, \vec{y})M(\vec{y}, \vec{z}) + F_{\rm lcs}(\vec{x},\vec{z})M(\vec{x},\vec{y})\frac{M(\vec{y},\vec{z})}{M(\vec{x},\vec{z})}.&\ 
\end{eqnarray*}
Then we can obtain the following by dividing both sides of inequality above by $M(\vec{x},\vec{y})M(\vec{y},\vec{z})$:
\begin{eqnarray*}
\frac{F_{\rm lcs}(\vec{x},\vec{y})}{M(\vec{x},\vec{y})}+\frac{F_{\rm lcs}(\vec{y},\vec{z})}{M(\vec{y},\vec{z})} \leq 1 + \frac{F_{\rm lcs}(\vec{x},\vec{z})}{M(\vec{x},\vec{z})}
\end{eqnarray*}
$\Leftrightarrow$
\begin{eqnarray*}
1 - \frac{F_{\rm lcs}(\vec{x},\vec{z})}{M(\vec{x},\vec{z})} \leq 1 - \frac{F_{\rm lcs}(\vec{x},\vec{y})}{M(\vec{x},\vec{y})}+1 - \frac{F_{\rm lcs}(\vec{y},\vec{z})}{M(\vec{y},\vec{z})}.
\end{eqnarray*}

Thus Eq. (\ref{Eq:TINE}) holds for all $\vec{x},\vec{y},\vec{z} \in \mathcal{X}$, and the length between any two of them is not equal to zero. Actually, we do not consider the situation that a sequence is empty, because there exists no shortest path of which length is equal to zero.

\end{proof}

Consequently, the proof of {\bf Theorem 1} is finally given as presented below:
\begin{proof}
From {\bf Lemmas \ref{Lem:pds}}, the function of LCS metric is positive definite and symmetric. Also, 
{\bf Lemmas \ref{Lem:ti}} guarantees that the function of LCS metric satisfies the triangle inequality. Therefore, LCS metric satisfies the axiom for a metric. This completes the proof. 
\end{proof}

\subsection{Algorithm}
In this section, we will present the algorithm details of some key operations in the BLCS kernel and the FLCS kernel.

{\bf Algorithm \ref{Alg.1}} shows the implementation details of path sequence set computation. The ${\rm concatenate}(\vec{x},\vec{y})$ function here  concatenating two sequences $\vec{x},\vec{y}$ together using a push-back operation.

{\bf Algorithm \ref{Alg.2}} shows the implementation details of graph representation computation in {the} BLCS kernel, which outputs the path sequence set $\mathcal{X}$ and its mass vector $\mathbf{m}$. $\mathbf{m}(k)$ denotes the $k$-th element of $\mathbf{m}$.

{\bf Algorithm \ref{Alg.4}} shows the implementation details of the improved way to compute the path sequence set $\mathcal{X}$ and mass vector
$\mathbf{m}$ by performing fragmented path removing and adjacent point merging.

{\bf Algorithm \ref{Alg.3}} shows the implementation of LCS kernel value computation. We here use {the S}inkhorn method 
to solve optimal transport problem with ground distance matrix $\mat{D}$. We also use the Laplacian Kernel Function with parameter $\lambda$ to calculate kernel value.

\begin{algorithm}[H]
\caption{Compute shortest path serialization}
\label{Alg.1}
\begin{algorithmic}[1]
\Require Shortest path $p^\star_{i,j}$ in graph $\mathcal{G}=\{\mathcal{V},\mathcal{E}\}$; vertices $v_i \in \mathcal{V}$; edges of $v_i$ and $v_j$: 
$e_{i,j} \in \mathcal{E}$; mapping function of categorical node label $l(v)$; mapping function of categorical edge label $w(e)$.
\Ensure Shortest path sequence $\vec{x}_{i,j}$.
\State	$\vec{x}_{i,j} \gets null$ // Initialize the path sequence of $p^\star_{i,j}$
	\For {$v_k$ \textbf{in} $p_{i,j}^\star$} // Operate each node
\State	$\vec{x}_{i,j} \gets {\rm concatenate}(\vec{x}_{i,j},l(v_k))$

	// Concatenate $\vec{x}_{i,j}$ with next node label
		\If{$v_k$ is not the last node}
\State		$\vec{x}_{i,j} \gets {\rm concatenate}(\vec{x}_{i,j},-w(e_{k,k+1}))$

		// Concatenate $\vec{x}_{i,j}$ with next edge label
		\EndIf
	\EndFor

\end{algorithmic}
\end{algorithm}

\begin{algorithm}[H]
\caption{Compute graph representation in basic method}
\label{Alg.2}
\begin{algorithmic}[1]
\Require Objective graph $G=(\mathcal{V},\mathcal{E})$.
\Ensure Path sequence set $\mathcal{X}$; mass vector $\mathbf{m}$.
\State $\mathcal{X},\mathbf{m}\gets null$ 

// Initialize the shortest path sequence set of $\mathcal{G}$
\State $\mathcal{P}\gets$ Compute shortest path set
\For {$p_{i,j}^\star$ \textbf{in} $\mathcal{P}$} 
\State	$\vec{x}_{i,j} \gets F_{\rm serialize}(p_{i,j}\star)$ 

// Serialize each shortest path
\If {$\exists \vec{x}_k \in \mathcal{X}, \vec{x}_k = \vec{x}_{i,j}$}
\State $\mathbf{m}(k) \gets \mathbf{m}(k)+1 $
\Else
\State $\mathbf{m} \gets {\rm concatenate}(\mathbf{m},1)$
\State $\mathcal{X} \gets \mathcal{X}\cup \vec{x}_{i,j}$

// Add path sequence $\vec{x}_{i,j}$ to the set
\EndIf
\EndFor
\end{algorithmic}
\end{algorithm}

\begin{algorithm}[H]
\caption{Compute graph representation in fast method}
\label{Alg.4}
\begin{algorithmic}[1]
\Require Objective graph $G=(\mathcal{V},\mathcal{E})$; removing ratio $\rho \in [0,1)$; combination radius $s \in [0,1]$.
\Ensure Path sequence set $\mathcal{X}$; mass vector $\mathbf{m}$.
\State $\mathcal{X},\mathbf{m} \gets null$ 
\State $\mathcal{P},L_{\rm max}\gets$ Compute shortest path set of $G$ and the length of the longest one
\For {$p_{i,j}^\star$ \textbf{in} $\mathcal{P}$} 
	\If {$|p_{i,j}^\star| \leq \rho L_{\rm max}$} 
\State	\textbf{continue} // Removing fragmented paths
	\EndIf
\State	$\vec{x}_{i,j} \gets F_{\rm serialize}(p_{i,j}^\star)$ 

// Serialize each shortest path
\State	$\vec{x}_k, m_k \gets$ the nearest point of $\vec{x}_{i,j}$ in $\mathcal{X}$ and its mass
		\If {$\mathcal{X} \neq null$ \textbf{and} $d(\vec{x}_k, \vec{x}_{i,j}) <= s$}
\State		$\vec{x}_k \gets$ the lonest one among $\vec{x}_k,\vec{x}_{i,j}$
\State		$\mathbf{m}(k) \gets \mathbf{m}(k)+1$ 

// Combine the masses of $\vec{x}_k,\vec{x}_{i,j}$
		\Else
			
\State	$\mathbf{m} \gets {\rm concatenate}(\mathbf{m},1)$
\State 	$\mathcal{X}\gets \mathcal{X}\cup \vec{x}_{i,j}$ // Add $\vec{x}_{i,j}$ as a new point to $\mathcal{X}$
		\EndIf
\EndFor
\end{algorithmic}
\end{algorithm}

\begin{algorithm}[H]
\caption{Compute LCS kernel value}
\label{Alg.3}
\begin{algorithmic}[1]
\Require Two objective path sequence sets $\mathcal{X}_1,\mathcal{X}_2$ {obtained from objective graphs $G_1,G_2$}; mass vectors $\mathbf{m}_1, \mathbf{m}_2$;$\lambda$.
\Ensure LCS kernel value $k_{\rm LCS}(G_1,G_2)$
\State $\mat{D} \gets null$ // Initialize cost matrix of transporting $\mathcal{X}_1,\mathcal{X}_2$
\For {$(\vec{x}_1)_{i}$ \textbf{in} $\mathcal{X}_1$} 
	\For {$(\vec{x}_2)_{j}$ \textbf{in} $\mathcal{X}_2$} 
\State	$\mat{D}(i,j) \gets 1-\frac{F_{\rm lcs}((\vec{x}_1)_{i},(\vec{x}_2)_{j})}{\max(|(\vec{x}_1)_{i}|,|(\vec{x}_2)_{j}|)}$

// Compute ground distance matrix of $(\vec{x}_1)_{i},(\vec{x}_2)_{j}$ in LCS metric space
	\EndFor
\EndFor
\State $\mat{T}^\star \gets {\rm Sinkhorn}(\mat{D},\frac{\mathbf{m}_1}{||\mathbf{m}_1||_1},\frac{\mathbf{m}_2}{||\mathbf{m}_2||_1})$

 // Use the Sinkhorn method to compute optimal transport plan $\mat{T}^\star$
\State $d_{G} \gets \langle\mat{T}^\star,\mat{D}\rangle$ // Compute the Wasserstein distance
\State $k_{\rm LCS}(G_1,G_2) \gets e^{-\lambda d_{G}}$
\end{algorithmic}
\end{algorithm}

\clearpage	
\bibliographystyle{unsrt}
\bibliography{graph,graph_datasets,optimal_transport}

\begin{thebibliography}{10}

\bibitem{Huang_SigPro_2021}
J.~Huang, Z.~Fang, and H.~Kasai.
\newblock Lcs graph kernel based on wasserstein distance in longest common
  subsequence metric space.
\newblock {\em Signal Processing}, 189, 2021.

\bibitem{vishwanathan2010graph}
S.~V.~N. Vishwanathan, N.~N. Schraudolph, R.~Kondor, and K.~M. Borgwardt.
\newblock Graph kernels.
\newblock {\em The Journal of Machine Learning Research}, 11:1201--1242, 2010.

\bibitem{Kriege_ANS_2020}
N.~M. Kriege, F.~D. Johansson, and C.~Morris.
\newblock A survey on graph kernels.
\newblock {\em Applied Network Science}, 5(1):1--42, 2020.

\bibitem{huang2021graph}
J.~Huang and H.~Kasai.
\newblock Graph embedding using multi-layer adjacent point merging model.
\newblock In {\em IEEE International Conference on Acoustics, Speech and Signal
  Processing (ICASSP)}, 2021.

\bibitem{li2018deeper}
Q.~Li, Z.~Han, and X.~Wu.
\newblock Deeper insights into graph convolutional networks for semi-supervised
  learning.
\newblock In {\em Association for the Advancement of Artificial Intelligence
  (AAAI)}, 2018.

\bibitem{chen2020measuring}
D.~Chen, Y.~Lin, W.~Li, P.~Li, J.~Zhou, and X.~Sun.
\newblock Measuring and relieving the over-smoothing problem for graph neural
  networks from the topological view.
\newblock In {\em Association for the Advancement of Artificial Intelligence
  (AAAI)}, 2020.

\bibitem{Shervashidze_JMLR_2011_s}
N.~Shervashidze, P.~Schweitzer, E.~J.~V. Leeuwen, K.~Mehlhorn, and K.~M.
  Borgwardt.
\newblock Weisfeiler-lehman graph kernels.
\newblock {\em The Journal of Machine Learning Research}, 12:2539--2561, 2011.

\bibitem{weisfeiler1968reduction}
B.~Weisfeiler and A.~A. Lehman.
\newblock A reduction of a graph to a canonical form and an algebra arising
  during this reduction.
\newblock {\em Nauchno-Technicheskaya Informatsia}, 2(9):12--16, 1968.

\bibitem{gartner2003graph}
T.~G{\"a}rtner, P.~Flach, and S.~Wrobel.
\newblock On graph kernels: Hardness results and efficient alternatives.
\newblock In {\em Learning theory and kernel machines}, pages 129--143.
  Springer, 2003.

\bibitem{borgwardt2005shortest}
K.~M. Borgwardt and H.~Kriegel.
\newblock Shortest-path kernels on graphs.
\newblock In {\em IEEE international conference on data mining (ICDM)}, 2005.

\bibitem{Villani_2008_OTBook_s}
C.~Villani.
\newblock {\em Optimal transport: Old and new}.
\newblock Springer, 2008.

\bibitem{Peyre_2019_OTBook_s}
G.~Peyr{\'e} and M.~Cuturi.
\newblock Computational optimal transport.
\newblock {\em Foundations and Trends in Machine Learning}, 11(5-6):355--607,
  2019.

\bibitem{togninalli2019Wasserstein}
M.~Togninalli, E.~Ghisu, F.~Llinares-L{\'o}pez, B.~Rieck, and K.~Borgwardt.
\newblock Wasserstein weisfeiler-lehman graph kernels.
\newblock In {\em Advances in Neural Information Processing Systems (NeurIPS)},
  2019.

\bibitem{haussler1999convolution}
D.~Haussler.
\newblock Convolution kernels on discrete structures.
\newblock Technical report, Technical report, Department of Computer Science,
  University of California at Santa Cruz, 1999.

\bibitem{wagner1974string}
R.~A. Wagner and M.~J. Fischer.
\newblock The string-to-string correction problem.
\newblock {\em Journal of the ACM}, 21(1):168--173, 1974.

\bibitem{cormen2009introduction}
T.~H. Cormen, C.~E. Leiserson, R.~L. Rivest, and C.~Stein.
\newblock {\em Introduction to algorithms}.
\newblock MIT press, 2009.

\bibitem{maier1978complexity}
D.~Maier.
\newblock The complexity of some problems on subsequences and supersequences.
\newblock {\em Journal of the ACM}, 25(2):322--336, 1978.

\bibitem{Seguy_NIPS_2015_s}
V.~Seguy and M.~Cuturi.
\newblock Principal geodesic analysis for probability measures under the
  optimal transport metric.
\newblock In {\em Advances in Neural Information Processing Systems (NIPS)},
  2015.

\bibitem{Rolet_AISTATSI_2016_s}
A.~Rolet, M.~Cuturi, and G.~Peyr{\'e}.
\newblock Fast dictionary learning with a smoothed wasserstein loss.
\newblock In {\em Nineteenth International Conference on Artificial
  Intelligence and Statistics (AISTATS)}, 2016.

\bibitem{Courty_ECML_2014_s}
N.~Courty, R.~Flamary, and D.~Tuia.
\newblock Domain adaptation with regularized optimal transport.
\newblock In {\em Joint European Conference on Machine Learning and Knowledge
  Discovery in Databases (ECML)}, 2014.

\bibitem{Cuturi_ICML_2014_s}
M.~Cuturi and A.~Doucet.
\newblock Fast computation of {W}asserstein barycenters.
\newblock In {\em Thirty-first International Conference on Machine Learning
  (ICML)}, 2014.

\bibitem{Solomon_TOG_2016_s}
J.~Solomon, G.~Peyr{\'e}, V.~Kim, and S.~Sra.
\newblock Entropic metric alignment for correspondence problems.
\newblock {\em ACM Transactions on Graphics}, 35(4):72, 2016.

\bibitem{Fukunaga_arXiv_2021}
T.~Fukunaga and H.~Kasai.
\newblock Fast block-coordinate {F}rank-{W}olfe algorithm for semi-relaxed
  optimal transport.
\newblock {\em arXiv preprint arXiv:2103.05857}, 2021.

\bibitem{Kasai_ICASSP_2020}
H.~Kasai.
\newblock Multi-view {W}asserstein discriminant analysis with entropic
  regularized {W}asserstein distance.
\newblock In {\em IEEE International Conference on Acoustics, Speech and Signal
  Processing (ICASSP)}, 2020.

\bibitem{fukunaga_ICPR_2020}
T.~Fukunaga and H.~Kasai.
\newblock Wasserstein $k$-means with sparse simplex projection.
\newblock In {\em International Conference on Pattern Recognition (ICPR)},
  2020.

\bibitem{Sinkhorn_PJM_1967_s}
R.~Sinkhorn and P.~Knopp.
\newblock Concerning nonnegative matrices and doubly stochastic matrices.
\newblock {\em Pacific Journal of Mathematics}, 21(2):343--348, 1967.

\bibitem{Cuturi_2013_NIPS_s}
M.~Cuturi.
\newblock Sinkhorn distances: Lightspeed computation of optimal transport.
\newblock In {\em Advances in Neural Information Processing Systems (NIPS)},
  2013.

\bibitem{Benamou_2015_SIAMJSC_s}
J.~D. Benamou, G.~Carlier, M.~Cuturi, L.~Nenna, and G.~Peyr{\'e}.
\newblock Iterative bregman projections for regularized transportation
  problems.
\newblock {\em SIAM Journal on Scientific Computing,}, 37(2):A1111--A1138,
  2015.

\bibitem{kriege2016valid}
N.~M. Kriege, P.~Giscard, and R.~C. Wilson.
\newblock On valid optimal assignment kernels and applications to graph
  classification.
\newblock In {\em Advances in Neural Information Processing Systems (NIPS)},
  2016.

\bibitem{maretic2019got}
H.~P. Maretic, M.~El~Gheche, G.~Chierchia, and P.~Frossard.
\newblock {GOT}: an optimal transport framework for graph comparison.
\newblock In {\em Advances in Neural Information Processing Systems (NeurIPS)},
  2019.

\bibitem{titouan2019optimal}
V.~Titouan, N.~Courty, R.~Tavenard, and R.~Flamary.
\newblock Optimal transport for structured data with application on graphs.
\newblock In {\em International Conference on Machine Learning (ICML)}, 2019.

\bibitem{kolouri2020wasserstein}
S.~Kolouri, N.~Naderializadeh, G.~K. Rohde, and H.~Hoffmann.
\newblock Wasserstein embedding for graph learning.
\newblock In {\em International Conference on Learning Representations (ICLR)},
  2021.

\bibitem{kipf2016semi}
T.~N. Kipf and M.~Welling.
\newblock Semi-supervised classification with graph convolutional networks.
\newblock {\em arXiv preprint arXiv:1609.02907}, 2016.

\bibitem{xu2018powerful}
K.~Xu, W.~Hu, J.~Leskovec, and S.~Jegelka.
\newblock How powerful are graph neural networks?
\newblock In {\em International Conference on Learning Representations (ICLR)},
  2019.

\bibitem{gardner2015earth}
A.~Gardner, C.~A. Duncan, J.~Kanno, and R.~R. Selmic.
\newblock Earth mover's distance yields positive definite kernels for certain
  ground distances.
\newblock {\em arXiv preprint arXiv:1510.02833}, 2015.

\bibitem{loosli2015learning}
G.~Loosli, S.~Canu, and C.~S. Ong.
\newblock Learning svm in kre{\u \i}n spaces.
\newblock {\em IEEE transactions on Pattern Analysis and Machine Intelligence},
  38(6):1204--1216, 2015.

\bibitem{maranzana1963location}
F.~E. Maranzana.
\newblock On the location of supply points to minimize transportation costs.
\newblock {\em IBM Systems Journal}, 2(2):129--135, 1963.

\bibitem{park2009simple}
H.~Park and C.~Jun.
\newblock A simple and fast algorithm for $k$-medoids clustering.
\newblock {\em Expert systems with applications}, 36(2):3336--3341, 2009.

\bibitem{rokach2005clustering}
L.~Rokach and O.~Maimon.
\newblock Clustering methods.
\newblock In {\em Data mining and knowledge discovery handbook}, pages
  321--352. Springer, 2005.

\bibitem{moenning2003fast}
C.~Moenning and N.~A. Dodgson.
\newblock Fast marching farthest point sampling.
\newblock Technical report, University of Cambridge, Computer Laboratory, 2003.

\bibitem{levenshtein1966binary}
V.~I. Levenshtein.
\newblock Binary codes capable of correcting deletions, insertions, and
  reversals.
\newblock {\em Soviet physics doklady}, 10(8):707--710, 1966.

\bibitem{debnath1991structure}
A.~K. Debnath, R.~L. Lopez~de Compadre, G.~Debnath, A.~J. Shusterman, and
  C.~Hansch.
\newblock Structure-activity relationship of mutagenic aromatic and
  heteroaromatic nitro compounds. correlation with molecular orbital energies
  and hydrophobicity.
\newblock {\em Journal of medicinal chemistry}, 34(2):786--797, 1991.

\bibitem{helma2001predictive}
C.~Helma, R.~D. King, S.~Kramer, and A.~Srinivasan.
\newblock The predictive toxicology challenge 2000--2001.
\newblock {\em Bioinformatics}, 17(1):107--108, 2001.

\bibitem{zaharevitz2015aids}
D.~Zaharevitz.
\newblock Aids antiviral screen data, 2015.

\bibitem{yan2008mining}
X.~Yan, H.~Cheng, J.~Han, and P.~S. Yu.
\newblock Mining significant graph patterns by leap search.
\newblock In {\em ACM Special Interest Group on Management of Data (SIGMOD)},
  2008.

\bibitem{neumann2016propagation}
M.~Neumann, R.~Garnett, C.~Bauckhage, and K.~Kersting.
\newblock Propagation kernels: efficient graph kernels from propagated
  information.
\newblock {\em Machine Learning}, 102(2):209--245, 2016.

\bibitem{sutherland2003spline}
J.~J. Sutherland, L.~A. O'brien, and D.~F. Weaver.
\newblock Spline-fitting with a genetic algorithm: A method for developing
  classification structure-activity relationships.
\newblock {\em Journal of chemical information and computer sciences},
  43(6):1906--1915, 2003.

\bibitem{schomburg2004brenda}
I.~Schomburg, A.~Chang, C.~Ebeling, M.~Gremse, C.~Heldt, G.~Huhn, and
  D.~Schomburg.
\newblock {BRENDA}, the enzyme database: updates and major new developments.
\newblock {\em Nucleic acids research}, 32(suppl\_1):D431--D433, 2004.

\bibitem{borgwardt2005protein}
K.~M. Borgwardt, C.~S. Ong, S.~Sch{\"o}nauer, S.~V.~N. Vishwanathan, A.~J.
  Smola, and H.~Kriegel.
\newblock Protein function prediction via graph kernels.
\newblock {\em Bioinformatics}, 21(suppl\_1):i47--i56, 2005.

\bibitem{yanardag2015deep}
P.~Yanardag and S.~V.~N. Vishwanathan.
\newblock Deep graph kernels.
\newblock In {\em ACM Special Interest Group on Knowledge Discovery and Data
  Mining (SIGKDD)}, 2015.

\bibitem{Morris+2020}
C.~Morris, N.~M. Kriege, F.~Bause, K.~Kersting, P.~Mutzel, and M.~Neumann.
\newblock Tudataset: A collection of benchmark datasets for learning with
  graphs.
\newblock In {\em International Conference on Machine Learning (ICML)}, 2020.

\bibitem{hu2020open}
W.~Hu, M.~Fey, M.~Zitnik, Y.~Dong, H.~Ren, B.~Liu, M.~Catasta, and J.~Leskovec.
\newblock Open graph benchmark: Datasets for machine learning on graphs.
\newblock {\em arXiv preprint arXiv:2005.00687}, 2020.

\bibitem{siglidis2020grakel}
G.~Siglidis, G.~Nikolentzos, S.~Limnios, C.~Giatsidis, K.~Skianis, and
  M.~Vazirgiannis.
\newblock Grakel: A graph kernel library in python.
\newblock {\em Journal of Machine Learning Research}, 21(54):1--5, 2020.

\bibitem{ivanov2018anonymous}
S.~Ivanov and E.~Burnaev.
\newblock Anonymous walk embeddings.
\newblock In {\em International Conference on Machine Learning (ICML)}, 2018.

\bibitem{stone1974cross}
M.~Stone.
\newblock Cross-validatory choice and assessment of statistical predictions.
\newblock {\em Journal of the Royal Statistical Society: Series B
  (Methodological)}, 36(2):111--133, 1974.

\bibitem{JMLR:v15:claesen14a}
M.~Claesen, F.~De~Smet, J.~Suykens, and B.~De~Moor.
\newblock Ensemblesvm: A library for ensemble learning using support vector
  machines.
\newblock {\em Journal of Machine Learning Research}, 15(4):141--145, 2014.

\bibitem{beaini2020directional}
D.~Beani, S.~Passaro, V.~L{\'e}tourneau, W.~Hamilton, G.~Corso, and P.~Li{\`o}.
\newblock Directional graph networks.
\newblock In {\em International Conference on Machine Learning (ICML)}, 2021.

\bibitem{bakkelund2009lcs}
D.~Bakkelund.
\newblock An lcs-based string metric.
\newblock {\em Olso, Norway: University of Oslo}, 2009.

\end{thebibliography}

\end{document}